\documentclass{article}
\usepackage[T1]{fontenc}
\usepackage{iclr2027_conference,times}

\usepackage{amsmath,amsfonts,bm}

\def\eqref#1{equation~\ref{#1}}

\def\1{\bm{1}}

\DeclareMathAlphabet{\mathsfit}{\encodingdefault}{\sfdefault}{m}{sl}
\SetMathAlphabet{\mathsfit}{bold}{\encodingdefault}{\sfdefault}{bx}{n}

\usepackage{microtype}
\usepackage{amsmath,amssymb,amsthm,mathtools,bm}
\usepackage{enumitem}
\usepackage{booktabs,multirow,graphicx,subcaption}
\usepackage{pgfplots}
\pgfplotsset{compat=1.18}
\usepackage[ruled,linesnumbered]{algorithm2e}
\usepackage{float,pifont}
\usepackage{adjustbox}
\IfFormatAtLeastTF{2026-06-01}{}{\usepackage{aliascnt}}
\usepackage[table]{xcolor}

\definecolor{rankone}{HTML}{0072B2}   
\definecolor{ranktwo}{HTML}{D55E00}   
\definecolor{rankthree}{HTML}{009E73} 

\usepackage{hyperref}
\hypersetup{hidelinks,
  pdftitle={When Missingness Follows the Graph: Beyond Feature Propagation for Missing Features},
  pdfauthor={Enrico Chialà, Francesco Ferrini, Antonio Longa, Bruno Lepri, Filippo Maria Bianchi, Manfred Jaeger, Veronica Lachi}}

\usepackage{wrapfig}
\usepackage{url}
\usepackage{cleveref}

\newtheorem{theorem}{Theorem}
\newtheorem{proposition}{Proposition}
\IfFormatAtLeastTF{2026-06-01}{
  \newtheorem{lemma}[theorem]{Lemma}
}{
\newaliascnt{lemma}{theorem}

\aliascntresetthe{lemma}
}
\IfFormatAtLeastTF{2026-06-01}{
  \newtheorem{corollary}[theorem]{Corollary}
}{
\newaliascnt{corollary}{theorem}

\aliascntresetthe{corollary}
}
\theoremstyle{definition}
\IfFormatAtLeastTF{2026-06-01}{
  \newtheorem{definition}[theorem]{Definition}
}{
\newaliascnt{definition}{theorem}
\newtheorem{definition}[definition]{Definition}
\aliascntresetthe{definition}
}
\theoremstyle{remark}
\IfFormatAtLeastTF{2026-06-01}{
  \newtheorem{remark}[theorem]{Remark}
}{
\newaliascnt{remark}{theorem}

\aliascntresetthe{remark}
}
\crefname{definition}{Definition}{Definitions}
\Crefname{definition}{Definition}{Definitions}
\crefname{proposition}{Proposition}{Propositions}
\Crefname{proposition}{Proposition}{Propositions}
\crefname{lemma}{Lemma}{Lemmas}
\Crefname{lemma}{Lemma}{Lemmas}

\newcommand{\PE}{\mathrm{PE}}

\newcommand{\GCNmfPE}{\textsc{PEMix}}

\newcommand{\RTMAR}{\textsc{RT}}

\newcommand{\todo}[1]{}
\newcommand{\MODEL}{\textsc{PEMix}}

\newcommand{\ff}[1]{}

\iclrfinalcopy
\title{When Missingness Follows the Graph: Beyond Feature Propagation for Missing Features}
\author{\begin{minipage}[t]{\dimexpr\textwidth-2\tabcolsep\relax}
\vspace{0.8em}
\centering
{\normalsize\bfseries
Enrico Chialà\textsuperscript{1}, Francesco Ferrini\textsuperscript{2}, Antonio Longa\textsuperscript{3}, Bruno Lepri\textsuperscript{2},\\[0.2em]
Filippo Maria Bianchi\textsuperscript{3}, Manfred Jaeger\textsuperscript{4}, Veronica Lachi\textsuperscript{3}}\\[0.7em]
{\normalfont\small
\textsuperscript{1}University of Trento\quad
\textsuperscript{2}Fondazione Bruno Kessler\\[0.15em]
\textsuperscript{3}UiT, The Arctic University of Norway\quad
\textsuperscript{4}Aalborg University}
\end{minipage}}
\date{}
\begin{document}
\maketitle
\fancyhead{}
\fancyhead[L]{Preprint}
\renewcommand{\headrulewidth}{0.4pt}
\raggedbottom

\begin{abstract}
Graph Neural Networks (GNNs) commonly assume that node features are fully observed, despite missing features being pervasive in real-world applications. 
By leveraging the dependence between node features and graph structure, existing graph-specific methods handle missingness by propagating observed feature values along graph edges.
However, these methods have mainly been evaluated in overly optimistic settings, where feature entries are removed independently and uniformly across the graph. 
This overlooks realistic scenarios in which missingness propagates among connected nodes, as can occur during localized sensing failures in sensor networks. We formalize this complementary notion as \emph{graph-structured mask dependence}, where the joint distribution of missingness indicators is shaped by the graph, and introduce a controlled mechanism that instantiates this setting.
We study propagation-based methods theoretically and empirically in this setting, showing that their effectiveness changes substantially  compared with the uniformly random regime. 
Motivated by this limitation, we propose PEMix, designed to remain effective under challenging graph-structured missingness. Rather than propagating observed values, PEMix uses global structural representations to condition a probabilistic model of incomplete node features before message passing. Experiments under controlled graph-structured missingness and naturally occurring missingness show that PEMix remains robust across complex missingness regimes, with particularly strong performance as missingness becomes severe.

\end{abstract}

\section{Introduction}
\label{sec:intro}
Graph Neural Networks (GNNs) are commonly designed under the assumption that node features are fully observed \citep{kipf2017gcn,hamilton2017graphsage,velickovic2018gat}. 
In practice, this assumption is often violated. 
Sensors may fail to record part of their measurements \citep{braem2024missing,faizin2019review}, while communication, acquisition, or infrastructure failures can also lead to missing observations.

Naive approaches rely on simple statistics to fill missing data, treating the feature matrix as tabular~\citep{ferrini2026rethinking,TAGUCHI2021155}. 
Graph-specific methods instead leverage the graph topology to handle missingness more effectively. 
Most existing methods estimate missing features by propagating the observed values while minimizing a Dirichlet
energy \citep{rossi2022unreasonable}.
Methods such as PCFI and FISF extend this principle with confidence-based and iterative propagation schemes \citep{um2023confidencebased,um2025propagate} and achieve strong empirical performance on downstream node-level tasks.

However, performance is mostly evaluated on an oversimplified family of missingness patterns. 
The standard protocol removes feature entries independently under uniform Missing Completely At Random (MCAR) \citep{rubin1976inference}. 
More recent work has considered feature- and label-dependent mechanisms, including Missing Not At Random (MNAR) settings \citep{ferrini2026rethinking}, but these protocols still focus on how missingness depends on feature values or labels. 
They leave unexplored a different aspect of the problem: how missingness is conditioned by topology.
The latter is particularly relevant in applications where failures propagate locally. 
In sensor networks or smart grids, physical or communication faults often cause entire connected regions of nodes to drop their measured features simultaneously~\citep{marisca2024graph}.
Consequently, missingness indicators violate standard spatial independence assumptions, generating correlated missing patterns across adjacent nodes.
Thus, the same global missingness rate can lead to very different levels of difficulty for graph-based methods depending on how missing entries are distributed over the graph. 
These patterns challenge propagation-based methods: when missing-feature nodes cluster together, there is too little observed information to propagate. We analyze this limitation theoretically and empirically and we propose PEMix, designed to
remain effective under challenging graph-structured missingness.


\begin{figure}[t]
    \centering
    \includegraphics[width=1\linewidth]{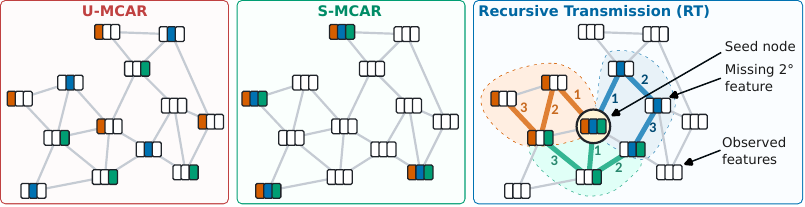}
    \caption{\textbf{Conceptual comparison of missingness mechanisms.} 
    \emph{Left (U-MCAR):} Feature entries are dropped independently and uniformly across the graph. 
    \emph{Middle (S-MCAR):} Entire feature vectors are missing for randomly sampled nodes, independent of connectivity. 
    \emph{Right (Recursive Transmission, RT):} Missingness originates at a shared seed node and expands recursively along graph edges, forming spatially correlated missing regions that challenge propagation-based recovery.}
    \label{fig:placeholder}
\end{figure}

Our contributions are as follows.

\begin{enumerate}[leftmargin=*]

\item We distinguish the classical dependence of the missingness mask on the data from the dependence structure within the mask itself, formalize
\emph{graph-structured mask dependence}, and introduce Recursive Transmission (RT), a controlled graph-structured mechanism.

\item We establish a boundary bottleneck for standard feature propagation and show empirically that graph-localized missingness challenges more sophisticated propagation methods.



\item We propose PEMix to handle clustered missingness where propagation-based approaches fail. It conditions a probabilistic feature model on global structural representations prior to message passing, incorporating graph topology directly without relying on value propagation.

\item  Across controlled graph-structured missingness and real-world datasets, we show that PEMix improves robustness over propagation baselines, achieving particularly strong gains as missingness severity increases.


\end{enumerate}

\section{Learning with Missing Node Features}
\label{sec:learning_missing_features}

In this section, we follow the notation of \citet{ferrini2026rethinking}. We consider a connected attributed graph
$G=(V,E,\mathbf{X},\mathbf{Y})$, where
$V=\{1,\ldots,n\}$ is the set of nodes,
$E\subseteq V\times V$ is the set of edges represented by the
adjacency matrix $\mathbf{A}\in\{0,1\}^{n\times n}$,
$\mathbf{X}\in\mathbb{R}^{n\times d}$ is the node feature matrix,
with $X_{ij}$ denoting feature $j$ of node $i$, and
$\mathbf{Y}\in\{1,\ldots,K\}^n$ is the vector of node labels. Throughout the paper, $\mathbf{A}$ is a random variable in the probabilistic formulation and is fully observed. We condition on its realized value throughout learning and inference.

Let $\mathbf{M}\in\{0,1\}^{n\times d}$ denote the missingness mask,
where $M_{ij}=1$ iff $X_{ij}$ is unobserved. We write
$\mathbf{X}_{\mathrm{obs}}$ and $\mathbf{X}_{\mathrm{mis}}$ for the
observed and missing entries induced by $\mathbf{M}$, respectively.
The learner receives $(\mathbf{A},\mathbf{X}_{\mathrm{obs}},\mathbf{M})$
together with the labels of the training nodes.

The complete-data distribution can be factorized as
\begin{equation}
    p_{\theta,\gamma,\lambda}
    (\mathbf{X},\mathbf{Y},\mathbf{M}\mid\mathbf{A})
    =
    p_\theta(\mathbf{X}\mid\mathbf{A})\,
    p_\gamma(\mathbf{Y}\mid\mathbf{X},\mathbf{A})\,
    p_\lambda(\mathbf{M}\mid\mathbf{X},\mathbf{Y},\mathbf{A}),
    \label{eq:joint_distribution}
\end{equation}
where $p_\theta$ models node features, $p_\gamma$ the complete-data
predictor, and $p_\lambda$ the missingness mechanism. 
Conditioning on $\mathbf{A}$ makes explicit that the fixed graph structure can inform these distributions.

With incomplete node features, the predictive distribution is
\begin{equation}
    p(\mathbf{Y}\mid\mathbf{X}_{\mathrm{obs}},\mathbf{M},\mathbf{A})
    =
    \int
    p(\mathbf{Y}\mid\mathbf{X},\mathbf{M},\mathbf{A})\,
    p(\mathbf{X}_{\mathrm{mis}}
    \mid\mathbf{X}_{\mathrm{obs}},\mathbf{M},\mathbf{A})
    \,d\mathbf{X}_{\mathrm{mis}}.
    \label{eq:predictive_missing}
\end{equation}
A standard GNN parameterizes
$p_\gamma(\mathbf{Y}\mid\mathbf{X},\mathbf{A})$ for complete
$\mathbf{X}$. It is, therefore, necessary to handle incomplete feature
information before or during prediction.

Some methods handle missing node features without using
topological information during feature recovery. They operate only on
the incomplete feature matrix, typically by replacing missing entries in $\mathbf{X}_{\mathrm{mis}}$
with statistics computed from the observed values of the same feature in $\mathbf{X}_{\mathrm{obs}}$,
such as the mean or median. 
For example, GCNmf models the feature distribution with
a Gaussian mixture and uses the resulting conditional moments to handle
missing coordinates \citep{TAGUCHI2021155}. After filling missing
entries with placeholders, GNN-MIM instead concatenates the missingness
mask to the node features \citep{ferrini2026rethinking}. In these
approaches, the graph is still used by the downstream GNN, but
missing features are not recovered using any topological information.

On the other hand, topology-aware approaches such as Feature Propagation (FP)
\citep{rossi2022unreasonable}, estimates missing feature
values by diffusing observed values across graph edges. For a feature
channel $j$, let $\mathbf{x}^{(j)}=\mathbf{X}_{:,j}$,
$O_j=\{i:M_{ij}=0\}$, and
$\mathcal{M}_j=\{i:M_{ij}=1\}$ denote the feature vector and the nodes
where feature $j$ is observed and missing, respectively. 
FP repeatedly updates feature values through
$\widehat{\mathbf{A}}$, the normalized adjacency matrix used for
propagation, while restoring the observed entries after each
step:
\begin{equation}
x_i^{(j,t+1)}
=
\begin{cases}
X_{ij},
    & i\in O_j,\\[2mm]
\displaystyle
\sum_{u\in V}
\widehat{A}_{iu}x_u^{(j,t)},
    & i\in\mathcal{M}_j.
\end{cases}
\label{eq:fp_update}
\end{equation}

PCFI extends this principle by assigning confidence scores based on the
distance from observed features and combining channel-wise diffusion
with inter-channel information \citep{um2023confidencebased}. FISF
alternates propagation with repeated injection of the observed features
\citep{um2025propagate}. Although these methods differ in their
implementation, they all recover missing feature by propagating observed feature values
through graph connections. 

A crucial point often neglected in the literature is that the mechanisms used to describe the missingness are often orthogonal to how missingness is recovered.
The latter is described by the missingness mechanism $p_\lambda$ in~\eqref{eq:joint_distribution}. Following Rubin
\citep{rubin1976inference}, missingness mechanisms can be characterized
through conditional dependencies between the mask and the data. Since
$\mathbf{A}$ is fully observed and fixed in our setting, we express these
conditions through $\mathbf{A}$.

For node classification, a mechanism is \emph{feature-MAR} when
\begin{equation}
    p(\mathbf{M}\mid
    \mathbf{X}_{\mathrm{mis}},
    \mathbf{X}_{\mathrm{obs}},
    \mathbf{A})
    =
    p(\mathbf{M}\mid
    \mathbf{X}_{\mathrm{obs}},
    \mathbf{A}),
    \label{eq:feature_mar}
\end{equation}
i.e., the missingness only depends on the observed features. On the other hand, is
\emph{label-MAR} when
\begin{equation}
    p(\mathbf{M}\mid\mathbf{X},\mathbf{Y},\mathbf{A})
    =
    p(\mathbf{M}\mid\mathbf{X},\mathbf{A}),
    \label{eq:label_mar}
\end{equation}
and \emph{Missing Completely At Random} (MCAR) when
\begin{equation}
    p(\mathbf{M}\mid\mathbf{X},\mathbf{Y},\mathbf{A})
    =
    p(\mathbf{M}\mid\mathbf{A}).
    \label{eq:mcar}
\end{equation}
As in \cite{ferrini2026rethinking} we treat the observed graph structure as fixed throughout learning and inference. Hence, MCAR refers to independence of the missingness mask from the random data variables \(X\) and \(Y\), conditional on the realized graph \(A\).
Since $\mathbf{A}$ is fixed,~\eqref{eq:mcar} is equivalent to
the usual statement that the mask is independent of the data
$(\mathbf{X},\mathbf{Y})$. Missingness is MNAR when the mask depends on hidden feature values or labels. Under feature-MAR and label-MAR, the missingness mechanism is ignorable
for prediction and~\eqref{eq:predictive_missing} reduces to
\begin{equation}
    p(\mathbf{Y}\mid
    \mathbf{X}_{\mathrm{obs}},\mathbf{M},\mathbf{A})
    =
    \int
    p_\gamma(\mathbf{Y}\mid\mathbf{X},\mathbf{A})\,
    p_\theta(\mathbf{X}_{\mathrm{mis}}
    \mid\mathbf{X}_{\mathrm{obs}},\mathbf{A})
    \,d\mathbf{X}_{\mathrm{mis}}.
    \label{eq:ignorable_predictive}
\end{equation}

To assess robustness to missing features, existing methods are commonly evaluated by injecting
missingness into complete graph datasets. The standard
protocol assumes uniform MCAR (U-MCAR), where each feature entry is
removed independently with a fixed probability $\mu$,
$M_{ij}\overset{\mathrm{i.i.d.}}{\sim}\mathrm{Bernoulli}(\mu)$.
Another commonly used MCAR protocol removes the complete feature vectors of uniformly selected nodes; following
\citet{ferrini2026rethinking}, we refer to this setting as Structural
MCAR (S-MCAR).


The MCAR/MAR/MNAR taxonomy characterizes how $\mathbf{M}$ depends on the data, but not how the entries of $\mathbf{M}$ depend on one another. This distinction is crucial for graph-based recovery: two mechanisms can be equally MCAR and share the same global missingness rate, yet arrange missing entries according to the graph topology. This spatial arrangement directly affects propagation-based methods, as the configuration of observed entries determines the information available for diffusion. In the next section, we formalize this structural missingness when the graph shapes the mask.

\section{Graph-Structured Missingness}

To study how graph topology conditions missingness patterns, we first formalize the internal joint organization of the mask $M$. Let $p(\mathbf{M}\mid\mathbf{A})$ denote the joint distribution of the missingness mask conditional on the observed adjacency matrix $\mathbf{A}$. 
We say that the mask is \emph{internally dependent} when its distribution does not factorize over individual entries:


\begin{equation}
    p(\mathbf{M}\mid\mathbf{A}) \neq
    \prod_{i=1}^{n}
    \prod_{j=1}^{d}
    p(\mathbf{M}_{ij}\mid\mathbf{A}).
    \label{eq:mask_dependence}
\end{equation}

Equation~\ref{eq:mcar} constrains the dependence between $\mathbf{M}$ and the
data $(\mathbf{X},\mathbf{Y})$, whereas
Equation~\ref{eq:mask_dependence} concerns dependence among entries
of $\mathbf{M}$: for example, consider an MCAR mechanism that masks exactly
$m$ out of the $N=nd$ feature entries uniformly without replacement. For any two distinct entries $a=ij$ and $b=zk$,
$\Pr(\mathbf{M}_a=1\mid\mathbf{A})=\frac{m}{N}$, while
$\Pr(\mathbf{M}_b=1\mid \mathbf{M}_a=1,\mathbf{A})=\frac{m-1}{N-1}\neq\frac{m}{N}$.
Thus, fixing the global missingness budget already induces dependence among mask entries and is a broad property. In this work, we are interested in a narrower
case in which the graph itself shapes this dependence structure, which is a realistic and challenging scenario. 

\subsection{Graph-Structured Dependence within the Missingness Mask}
We characterize graph-structured dependence at the level of the joint mask distribution.

\begin{definition}[Graph-structured mask dependence]
\label{def:graph_structured_dependence}
Let $p(\mathbf{M}\mid\mathbf{A})$ denote the conditional mask distribution. The missing mechanism exhibits \emph{graph-structured mask
dependence} if

\begin{enumerate}[leftmargin=*]
    \item the internal dependency on the mask is conditional on $\mathbf{A}$ and
    \item the joint mask distribution depends on graph structure, i.e.,
    there exist possible adjacency matrices $\mathbf{A}$ and $\mathbf{A}'$ on the same node set, each with positive probability, such that
    $p(\mathbf{M}\mid\mathbf{A})\neq p(\mathbf{M}\mid\mathbf{A}')$,
    with all non-topological parameters of the mechanism held fixed.
\end{enumerate}
\end{definition}

This definition is broad enough to encompass the missingness mechanisms from the literature: U-MCAR without fixed budget is MCAR
with internally independent mask; fixed-budget U-MCAR induces internally dependent masks without graph-structured dependence.
Finally, S-MCAR also induces strong within-node dependence by masking complete feature vectors, but the
masked nodes are sampled independently of $\mathbf{A}$. These standard protocols do not model graph-structured dependencies among mask entries. In the following, we introduce a mechanism designed to account for such dependencies.

\subsection{Recursive-Transmission MCAR}
\label{sec:rt_mcar}

Graph-structured mask dependence can be instantiated in many ways. 
We propose a mechanism for graph-structured missingness called \emph{Recursive Transmission} (RT) that: i) is directly comparable with
standard MCAR protocols at the same missingness rate; ii) generates a missingness mask that can
challenge existing propagation-based methods; iii) captures a plausible class of localized failure patterns. 
For a target missingness rate $\mu$, let $m=\lfloor \mu nd\rfloor$
be the total number of entries to mask. We distribute this budget as
evenly as possible across each feature channel $j$, obtaining budgets $m_j$
such that $\sum_{j=1}^{d}m_j=m$.
RT starts from a source node $s\in V$ shared across feature channels
and expands each feature-specific missing region only through graph
edges. For feature $j$ with $m_j>0$, initialize $C_j^{(1)}=\{s\}$. At iteration $t$, define
\(
    \partial C_j^{(t)}
    =
    \left\{
        v\notin C_j^{(t)}
        :
        \exists u\in C_j^{(t)}
        \text{ with }(u,v)\in E
    \right\},
\)
and, for each candidate $v\in\partial C_j^{(t)}$,
$a_j^{(t)}(v)=\left|\mathcal{N}(v)\cap C_j^{(t)}\right|$.

The next node is chosen as $v_{t+1}^{(j)}
    \in
    \arg\max_{v\in\partial C_j^{(t)}}
    a_j^{(t)}(v),
    $ with uniform random tie-breaking, independently across feature
channels. We then set $C_j^{(t+1)}=C_j^{(t)}\cup\{v_{t+1}^{(j)}\}$,
and continue until $|C_j|=m_j$. Finally,
$\mathbf{M}_{ij}=\mathbb{I}\{i\in C_j\}$.
The structural differences between missingness protocols are illustrated in Figure~\ref{fig:placeholder}. 
While U-MCAR removes feature entries uniformly at random and S-MCAR drops complete feature vectors for independently selected nodes, RT models localized failures that originate from a shared source node and expand along graph edges, generating compact, spatially correlated missing regions.
The construction is motivated by the three goals above. First, RT
represents a localized failure with a common origin, as may occur in
sensor, communication, power-grid, or acquisition systems, while
feature-specific growth allows different measurements to be affected along different trajectories. 

Among the possible instantiations of the mechanism, we focus on the challenging case in which the seeds of the missingness of each feature coincide with the same source node. 
In general, in RT one could assign a distinct seed $s$ to each feature. 
However, on large graphs, especially at moderate missingness rates and with relatively few features, this would place feature-specific missing regions in different parts of the graph. 
Many affected nodes would then lose only one or a few measurements, while retaining most of their feature vector.
While such masks still contain graph-structured dependence, they describe a missingness that is easier to recover for methods that can exploit nearby observations or even just information from the other feature channels. 
On the other hand, by using the same seed for all channels, RT creates partially overlapping missing regions and therefore areas in which several measurements are simultaneously unavailable. 
Independent tie-breaking still allows the regions to diverge, avoiding to always end up in the more restrictive case where the same complete
set of features is missing at every affected node.

Finally, using a single global rate $\mu$ makes the severity of RT
directly comparable with fixed-budget U-MCAR and S-MCAR. The mechanisms can therefore be matched in the total number of missing entries while differing in how those entries are arranged over the graph.
RT is not intended as a generative model of all real-world missingness patterns. Rather, it is a controlled stress test that isolates the effect of graph-localized mask dependence while matching standard protocols in the total missingness budget. In our experimental setting, we therefore complement RT with experiments on naturally occurring missingness, for which no graph-structured mechanism is imposed.

\begin{proposition}[RT is graph-structured MCAR]
\label{prop:rt_graph_structured_mcar}
If source selection and tie-breaking are independent of $\mathbf{X}$ and $\mathbf{Y}$ conditional on $\mathbf{A}$,
then RT is MCAR and satisfies
Definition~\ref{def:graph_structured_dependence}.
\end{proposition}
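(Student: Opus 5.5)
The plan is to represent RT as a deterministic function of the graph and auxiliary randomness, and then check the three requirements (MCAR and the two clauses of Definition~\ref{def:graph_structured_dependence}) separately.

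\textbf{Step 1: RT as a function of $\mathbf{A}$ and independent randomness.} Write the mask as $\mathbf{M}=F(\mathbf{A},s,\xi)$. Here $s$ is the source node and $\xi=(\xi_1,\ldots,\xi_d)$ collects the per-channel tie-breaking randomness. The budgets $m_j$ depend only on $\mu$, $n$, $d$. The frontier $\partial C_j^{(t)}$ and the scores $a_j^{(t)}(v)$ are computed from $C_j^{(t)}$ and $E$ alone, so each step of the recursion reads only $\mathbf{A}$, the current set, and $\xi_j$. By induction on $t$, each $C_j$ is a measurable function of $(\mathbf{A},s,\xi_j)$, and hence so is $\mathbf{M}$.

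\textbf{Step 2: MCAR.} By hypothesis, $(s,\xi)\perp(\mathbf{X},\mathbf{Y})\mid\mathbf{A}$. Then for any mask $\mathbf{m}$,
\[
p(\mathbf{M}=\mathbf{m}\mid\mathbf{X},\mathbf{Y},\mathbf{A})=\Pr\bigl(F(\mathbf{A},s,\xi)=\mathbf{m}\mid\mathbf{X},\mathbf{Y},\mathbf{A}\bigr)=\Pr\bigl(F(\mathbf{A},s,\xi)=\mathbf{m}\mid\mathbf{A}\bigr).
\]
This is exactly~\eqref{eq:mcar}.

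\textbf{Step 3: Clause 1 (internal dependence given $\mathbf{A}$).} I will exhibit a single graph $\mathbf{A}$ with $p(\mathbf{M}\mid\mathbf{A})$ non-factorizing, which the definition requires only for some graph. The simplest witness uses two channels with $m_1=m_2=1$. Then $C_1=C_2=\{s\}$, so $M_{s1}=M_{s2}=1$. If $s$ is uniform over $n\ge2$ nodes, then $\Pr(M_{i1}=1,M_{i2}=1\mid\mathbf{A})=1/n$, while $\Pr(M_{i1}=1\mid\mathbf{A})\Pr(M_{i2}=1\mid\mathbf{A})=1/n^2$.

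If the source law is left general, the argument still goes through whenever $s$ is non-degenerate. It also covers the degenerate case, provided $m_j\ge2$ and some source has a tie among its neighbours. In that case the event $\{C_j=\{s,v\}\}$ has probability strictly in $(0,1)$, yet it is determined jointly by $M_{vj}$ and $M_{v'j}$ under the budget constraint $\sum_i M_{ij}=m_j$. Such a constraint is incompatible with independent entries that are not all degenerate. I expect pinning down this generality to be the main obstacle, since the statement does not fix the source law. I will handle it by stating the mild non-degeneracy assumption explicitly, under which dependence follows from the fixed per-channel budget, as in the example after~\eqref{eq:mask_dependence}.

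\textbf{Step 4: Clause 2 (dependence on topology).} On $n\ge3$ nodes, take $\mathbf{A}$ to be the path $1\!-\!2\!-\!3$ and $\mathbf{A}'$ the path $2\!-\!1\!-\!3$. Fix the source at node $1$ in both (or any common non-topological source rule) and set $m_j=2$.
\begin{itemize}
\item Under $\mathbf{A}$, the frontier of $\{1\}$ is $\{2\}$, so $C_j=\{1,2\}$ almost surely.
\item Under $\mathbf{A}'$, the frontier is $\{2,3\}$ with equal scores, so $C_j=\{1,3\}$ with probability $1/2$.
\end{itemize}
Hence $\Pr(M_{3j}=1\mid\mathbf{A})=0\neq\tfrac12=\Pr(M_{3j}=1\mid\mathbf{A}')$, with $\mu$, $d$ and the tie-breaking rule held fixed. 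Both graphs are connected and can be taken to have positive probability in the graph model; the same holds for larger $n$ by attaching identical pendant structure.
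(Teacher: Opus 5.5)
Your MCAR step is the paper's argument. Your two-path comparison in Step 4 is a valid alternative to the paper's complete-graph-versus-star witness; both ultimately rest on RT regions being connected.

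As assembled, however, the proof has a gap. Definition~\ref{def:graph_structured_dependence} asks one mechanism, with its non-topological parameters held fixed, to satisfy both clauses, and your two witnesses sit at incompatible settings.
With $m_1=m_2=1$ one has $C_j=\{s\}$ for every graph, so $p(\mathbf{M}\mid\mathbf{A})$ does not depend on $\mathbf{A}$ at all and clause 2 fails. The shared-seed cross-channel dependence you exhibit is exactly the S-MCAR-type within-node dependence that the paper classifies as not graph-structured.
Conversely, with the source fixed at node $1$ and $m_j=2$ as in Step 4, the mask under the path $1\!-\!2\!-\!3$ is deterministic and hence trivially factorizes. Under your weaker reading that clause 1 need hold for only some graph, you could patch this by applying your $m_j=2$ aside to $\mathbf{A}'$, where node $1$ has the tied neighbours $2$ and $3$. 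But you never check a single setting against both clauses, and the paper proves clause 1 at the realized graph.

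The aside is also overstated: it holds for $m_j=2$ but not for all $m_j\ge2$. Take $n=4$, a triangle $\{s,u,v\}$ plus a pendant node on $u$, a deterministic source $s$ and $m_j=3$. There is a tie at the first step, yet $C_j=\{s,u,v\}$ almost surely, since whichever of $u,v$ is not picked then has score $2$.

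The repair is to fix the paper's conditions once: connected graph, full-support source law $\pi$, and some channel with $2\le m_j\le n-1$. Clause 1 then follows from the within-channel budget argument you only gesture at, which is the paper's route:
every $M_{ij}$ equals $1$ with positive probability because $i$ can be the source; $\sum_i M_{ij}=m_j<n$ forces some $M_{ij}$ to vanish with positive probability; and independence would give $0=\operatorname{Var}(\sum_i M_{ij}\mid\mathbf{A})=\sum_i\operatorname{Var}(M_{ij}\mid\mathbf{A})>0$. This holds at every connected $\mathbf{A}$, not just some.

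Your Step 4 survives under such a $\pi$. On a path every boundary candidate has score $1$, so every interval containing the source is reached with positive probability. For $n=3$ and $m_j=2$ this gives $\Pr(C_j=\{2,3\}\mid\mathbf{A})=\pi(2)/2+\pi(3)>0$, whereas $\{2,3\}$ is disconnected in $\mathbf{A}'$ and so has probability $0$. More generally, $\{2,\ldots,m_j+1\}$ is an interval of $1\!-\!2\!-\!\cdots\!-\!n$ but is disconnected in $2\!-\!1\!-\!3\!-\!\cdots\!-\!n$, covering every $n\ge3$ and every $2\le m_j\le n-1$.

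Compared with the paper's complete graph and star, your pair has a mild advantage: the two graphs are isomorphic and sparse. It therefore shows that the mask law tracks labelled position rather than coarse statistics like edge density.
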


A formal proof is provided in
Appendix~\ref{app:proofs}. RT is thus a realistic graph-structured MCAR mechanism whose particular geometry
is chosen to probe a failure mode of propagation-based methods. We
make this connection precise next.

\subsection{Why Graph-Structured Missingness Is Difficult for Propagation}
\label{sec:propagation_bottleneck}

We now show why RT can be particularly challenging for methods that
recover missing features through graph propagation. Consider one feature channel and omit the feature index for readability.
Let $O$ and $M$ denote the observed and missing nodes for that feature,
respectively. The FP update introduced in
Equation~\ref{eq:fp_update} can be written in block form as
\begin{equation}
    x_M^{(t+1)}
    =
    \widehat{\mathbf{A}}_{MM}x_M^{(t)}
    +
    \widehat{\mathbf{A}}_{MO}x_O.
    \label{eq:fp_block}
\end{equation}
The first term propagates the current estimates within the missing set,
while the second introduces observed values into it. Define the observed boundary of the missing region as
\[
    \partial_O M
    =
    \left\{
        u\in O :
        \exists v\in M
        \text{ with }(u,v)\in E
    \right\}.
\]
Only nodes in $\partial_O M$ can directly inject observed information
into the missing region.

\begin{proposition}[Boundary bottleneck for feature propagation]
\label{prop:boundary_bottleneck}
Let $\rho(\widehat{\mathbf{A}}_{MM})$ denote the spectral radius of
$\widehat{\mathbf{A}}_{MM}$ and assume
$\rho(\widehat{\mathbf{A}}_{MM})<1$. Then the iteration in
~\eqref{eq:fp_block} converges to
\begin{equation}
    \widehat{x}_M
    =
    \mathbf{B}x_O,
    \qquad
    \mathbf{B}
    =
    \left(
        \mathbf{I}-\widehat{\mathbf{A}}_{MM}
    \right)^{-1}
    \widehat{\mathbf{A}}_{MO}.
    \label{eq:fp_limit}
\end{equation}
Moreover,
\begin{equation}
    \operatorname{rank}(\mathbf{B})
    \leq
    |\partial_O M|.
    \label{eq:boundary_rank}
\end{equation}
\end{proposition}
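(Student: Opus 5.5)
The plan has two parts: convergence of the affine iteration, then a column-support argument for the rank bound.

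\emph{Convergence.} Write $\mathbf{T}=\widehat{\mathbf{A}}_{MM}$ and $c=\widehat{\mathbf{A}}_{MO}x_O$, so that \eqref{eq:fp_block} reads $x_M^{(t+1)}=\mathbf{T}x_M^{(t)}+c$. Unrolling the recursion gives
\[
x_M^{(t)}=\mathbf{T}^t x_M^{(0)}+\sum_{k=0}^{t-1}\mathbf{T}^k c .
\]
Since $\rho(\mathbf{T})<1$, I invoke the standard fact (Gelfand's formula, or the Jordan form) that $\mathbf{T}^t\to 0$. Then:
\begin{itemize}
\item the first term vanishes for any initialization $x_M^{(0)}$;
\item the Neumann series $\sum_k \mathbf{T}^k$ converges to $(\mathbf{I}-\mathbf{T})^{-1}$;
\item invertibility of $\mathbf{I}-\mathbf{T}$ holds because $1$ is not an eigenvalue of $\mathbf{T}$.
\end{itemize}
This yields $\widehat{x}_M=(\mathbf{I}-\widehat{\mathbf{A}}_{MM})^{-1}\widehat{\mathbf{A}}_{MO}x_O=\mathbf{B}x_O$, which is \eqref{eq:fp_limit}. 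Equivalently, $\widehat{x}_M$ is the unique fixed point of the affine map.

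\emph{Rank bound.} I use $\operatorname{rank}(\mathbf{B})\le\operatorname{rank}(\widehat{\mathbf{A}}_{MO})$, which holds because $\mathbf{B}$ is a left multiple of $\widehat{\mathbf{A}}_{MO}$ by an invertible matrix (equality in fact holds). It then suffices to bound the rank of $\widehat{\mathbf{A}}_{MO}$, whose rows are indexed by $M$ and columns by $O$.

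The key observation is that the normalized adjacency keeps the sparsity pattern of $\mathbf{A}$. This holds for the usual normalizations $\mathbf{D}^{-1/2}\mathbf{A}\mathbf{D}^{-1/2}$ or $\mathbf{D}^{-1}\mathbf{A}$, and I will state it as an assumption on $\widehat{\mathbf{A}}$. Self-loops are harmless here, since they lie only on the diagonal and $M\cap O=\emptyset$. Consequently $(\widehat{\mathbf{A}}_{MO})_{vu}\neq 0$ only if $(u,v)\in E$ with $v\in M$, which means $u\in\partial_O M$.

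So every column of $\widehat{\mathbf{A}}_{MO}$ indexed by $u\in O\setminus\partial_O M$ is identically zero. The column space is therefore spanned by at most $|\partial_O M|$ columns, and $\operatorname{rank}(\widehat{\mathbf{A}}_{MO})\le|\partial_O M|$, which gives \eqref{eq:boundary_rank}. An equivalent factorization is $\mathbf{B}=\mathbf{B}_{:,\partial_O M}\,\mathbf{P}$, where $\mathbf{P}$ selects the boundary coordinates of $x_O$. This shows that $\widehat{x}_M$ depends on $x_O$ only through $x_{\partial_O M}$.

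\emph{Main obstacle.} The only delicate point is making the sparsity-preservation assumption on $\widehat{\mathbf{A}}$ explicit. If the edge relation is directed, $\partial_O M$ should be read with the edge orientation matching nonzero entries $\widehat{A}_{vu}$; for the undirected graphs considered here this is automatic. Everything else is standard linear algebra.
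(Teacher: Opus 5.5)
Your proposal is correct and follows essentially the same route as the paper. You unroll the affine recursion and use $\rho(\widehat{\mathbf{A}}_{MM})<1$ to get $\widehat{\mathbf{A}}_{MM}^{t}\to 0$ and convergence of the geometric series to $(\mathbf{I}-\widehat{\mathbf{A}}_{MM})^{-1}$. You then use locality of $\widehat{\mathbf{A}}$ to show that every column of $\widehat{\mathbf{A}}_{MO}$ indexed by $O\setminus\partial_O M$ vanishes, so $\operatorname{rank}(\mathbf{B})=\operatorname{rank}(\widehat{\mathbf{A}}_{MO})\le|\partial_O M|$; your explicit sparsity-preservation assumption and the self-loop and orientation remarks match the paper's ``local propagation matrix'' premise.
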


The proof is given in Appendix~\ref{app:proofs}.
Equation~\ref{eq:boundary_rank} makes the relevant property of the
mask explicit. When many missing nodes form a compact connected region with a small observed boundary, all reconstructed values inside the
region are determined by only a limited number of observed inputs. In contrast, the same number of missing entries scattered across the graph
may be adjacent to many more observed nodes.

To ground the theory of RT, we consider TADPOLE, which was identified in~\cite{ferrini2026rethinking} as a suitable benchmark
for evaluating robustness to missing node features. We compare RT with U-MCAR and S-MCAR, all with the same global missingness rate.
S-MCAR controls for strong within-node co-missingness while remaining independent of graph
connectivity.
\begin{figure}[t]
    \centering
    \includegraphics[width=\linewidth]{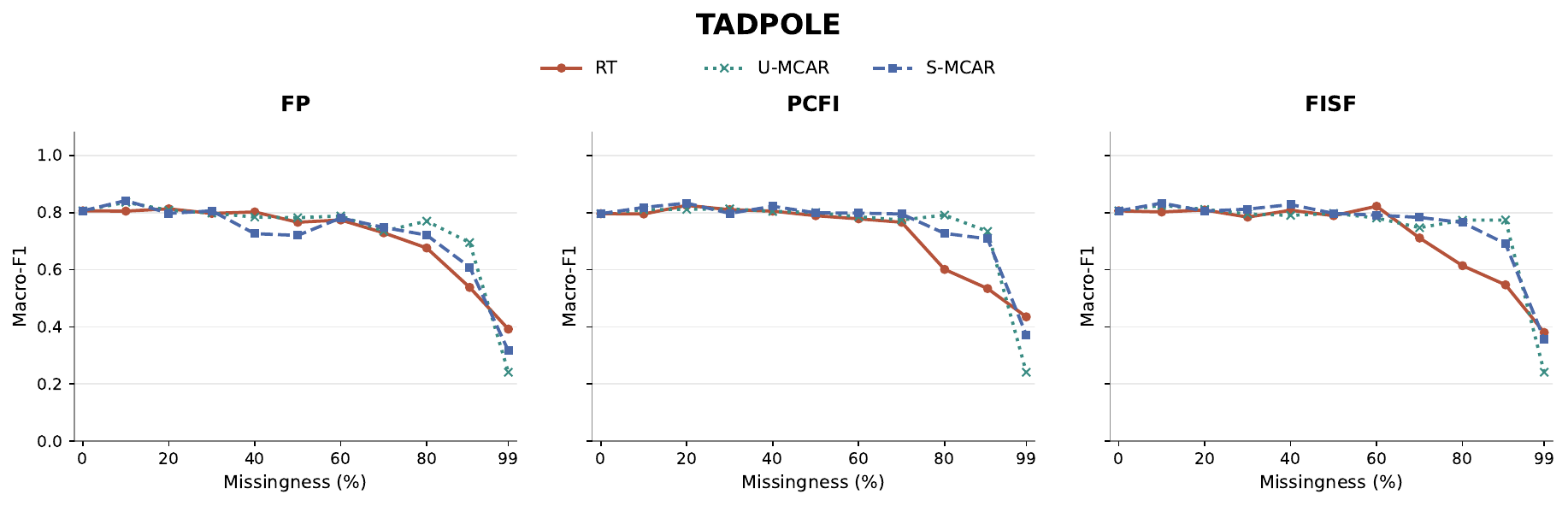}
    \caption{TADPOLE macro-F1 under RT, U-MCAR, and S-MCAR
    for FP, PCFI, and FISF. Curves show the mean and shaded
    bands indicate one standard deviation over five seeds.
    Complete macro-F1 and AUROC results are reported in
    Appendix~\ref{app:propagation-controls}.}
    \label{fig:rt_umcar_smcar}
\end{figure}
Figure~\ref{fig:rt_umcar_smcar} shows that RT is more challenging for propagation-based methods. 
Averaging the macro-F1 AUC across
FP, PCFI, and FISF gives $0.725$ under RT, compared with $0.754$ under
U-MCAR and $0.751$ under S-MCAR. Thus, graph-structured missingness
reduces average propagation performance by approximately $~4\%$
relative to U-MCAR and $~3.5\%$ relative to S-MCAR.

\section{PEMix: Structure-Conditioned Probabilistic Imputation}
\label{sec:pemix}

The analysis in Section~\ref{sec:propagation_bottleneck} shows that
propagation-based methods can become less effective when missingness propagates
through graph connections. The graph, however, remains fully observed and can
still provide information about node attributes. PEMix exploits this
information without using topology to propagate observed feature values into
missing regions.

Let $\mathbf P=g(\mathbf A)\in\mathbb R^{n\times q}$ be a fixed structural
representation of the graph, with $\mathbf p_i$ denoting the representation of
node $i$. In our experiments, $g$ returns standardized non-trivial Laplacian
eigenvectors. PEMix uses $\mathbf P$ to condition a probabilistic model of the
incomplete node features,
$p_\theta(\mathbf X^{\mathrm{mis}}\mid\mathbf X^{\mathrm{obs}},\mathbf A)
\approx
q_\eta(\mathbf X^{\mathrm{mis}}\mid\mathbf X^{\mathrm{obs}},\mathbf P)$.
The positional encodings are computed once from the fully observed graph and
are not learned by the mixture.

\paragraph{Structure-Conditioned Mixture}

For each node $i$, let $\mathbf x_i=\mathbf X_{i,:}\in\mathbb R^d$ denote its
feature vector and let $c_i\in\{1,\ldots,K\}$ denote a latent mixture
component, with $q(c_i=k)=\pi_k$. Conditioned on $c_i$, PEMix models node
features and structural position independently with diagonal Gaussian
distributions,
$q_\eta(\mathbf x_i\mid c_i=k)
=
\mathcal N(
\mathbf x_i\mid
\boldsymbol\mu_k^X,
\operatorname{diag}((\boldsymbol\sigma_k^X)^2))$
and
$q_\eta(\mathbf p_i\mid c_i=k)
=
\mathcal N(
\mathbf p_i\mid
\boldsymbol\mu_k^P,
\operatorname{diag}((\boldsymbol\sigma_k^P)^2))$.

Let
$\Omega_i=\{j\in\{1,\ldots,d\}:M_{ij}=0\}$
denote the observed coordinates of node $i$, and let
$\bar\Omega_i=\{1,\ldots,d\}\setminus\Omega_i$ denote the missing ones.
Conditioning jointly on the observed coordinates and on structural position
gives
\begin{equation}
\label{eq:pemix-responsibility}
\gamma_{ik}
\propto
\pi_k\,
q_\eta(\mathbf p_i\mid c_i=k)
\prod_{j\in\Omega_i}
\mathcal N
\left(
X_{ij}\mid\mu^X_{kj},(\sigma^X_{kj})^2
\right),
\end{equation}
with normalization over $k$. Equivalently, the structural term induces a
node-dependent prior
$\pi_k(\mathbf p_i)=q_\eta(c_i=k\mid\mathbf p_i)$, which is then updated using
the observed feature coordinates. The distinction from a feature-only mixture is therefore in the component
responsibilities. Without structural conditioning, every node starts from the
same prior $\pi_k$; in PEMix, structurally different nodes can have different
component probabilities before their observed features are considered. This is
particularly useful when only limited feature evidence remains. Under the diagonal covariance assumption, the conditional distribution of the
missing coordinates is
\begin{equation}
\label{eq:pemix-conditional}
q_\eta(
\mathbf x_{i,\bar\Omega_i}
\mid
\mathbf x_{i,\Omega_i},\mathbf p_i)
=
\sum_{k=1}^{K}
\gamma_{ik}
\prod_{j\in\bar\Omega_i}
\mathcal N
\left(
X_{ij}\mid\mu^X_{kj},(\sigma^X_{kj})^2
\right).
\end{equation}
Thus, graph structure affects the distribution used to impute missing
coordinates before message passing, rather than only determining how observed
values are propagated between nodes.

\paragraph{End-to-End Prediction}

PEMix retains the conditional uncertainty in Equation
\ref{eq:pemix-conditional} instead of committing to a single point estimate.
Following \citet{TAGUCHI2021155}, the corresponding component-wise
moments are marginalized analytically through the first message-passing layer,
after which the remaining GNN layers are unchanged. The positional encodings
affect only the mixture responsibilities in
\eqref{eq:pemix-responsibility}; they are not concatenated to the node
representation passed to the downstream GNN. 
Following~\citep{TAGUCHI2021155}, who showed that jointly optimizing the mixture and downstream GNN can improve missing-feature reconstruction, we initialize the mixture parameters with EM and subsequently optimize them jointly with the GNN using the downstream task loss. We validate this design choice in \Cref{app:gmm_joint}, where joint optimization consistently improves downstream performance compared with keeping the EM-initialized mixture fixed.
Implementation details, including initialization,
covariance parameterization, and positional-encoding dimension, are provided
in \Cref{app:implementation}. Structural conditioning does not change the
asymptotic cost of the probabilistic first layer: PEMix has the same per-epoch
complexity as GCNmf, up to a dominated $O(nKq)$ term and a one-off partial
eigendecomposition per graph (\Cref{app:complexity}).

\section{Experimental Evaluation}
\label{sec:experiments}

We address three research questions:
\begin{itemize}[leftmargin=*]
    \item \textbf{RQ1:} Does PEMix improve robustness under challenging graph-structured missingness while remaining competitive under standard random missingness?
    \item \textbf{RQ2:} How does PEMix perform under naturally occurring missingness?
    \item \textbf{RQ3:} Are the gains of PEMix due simply to access to positional encodings, or specifically to using them to condition the missing-feature distribution?
\end{itemize}

\paragraph{Datasets and protocol.}
To answer RQ1, we consider four datasets with artificially injected missingness, namely TADPOLE \citep{zhu2019multi}, GraphMSO, PEMS-BAY and Engrad \citep{marisca2024graph}. Since these datasets are complete, the missingness
mechanism and global missingness rate $\mu$ can be controlled directly. In particular we injected RT and U-MCAR with $\mu\in\{0,0.1,\ldots,0.9,0.99\}$. For RQ2, we instead consider datasets where missingness is part of the original
data. AQI, PV-US, and METR-LA are the air-quality, photovoltaic, and traffic datasets used by \citet{marisca2024graph}. We additionally evaluate
PEMix in a different setting using RelBench \citep{relbench}, a benchmark of
relational databases with naturally incomplete features. For forecasting datasets, we convert each prediction time into an independent static graph sample using only past observations; the temporal windows, graph construction, and leakage-prevention protocol are detailed in Appendix~\ref{app:datasets}. The dataset overview is provided in Appendix~\ref{app:datasets}, Table~\ref{tab:datasets}. On the standard graph datasets, we compare against simple baseline with zero imputation (GNN-Zero), median imputation (GNN-Median) and mean imputation (GNN-MI), probabilistic and mask-aware approaches
(GCNmf and GNN-MIM), and propagation methods (FP, PCFI, and FISF).
RelBench requires a relational architecture and has substantially fewer
missing-feature baselines. We therefore compare against the standard Relational
Deep Learning model (RDL)~\citep{relbench} and its missing-indicator extension RDL-MIM~\citep{ferrini2026rethinking}, and
adapt PEMix to the same backbone, denoted RDL-PEMix. The datasets and evaluation
regimes are summarized in \Cref{tab:datasets}.

\textbf{RQ1: PEMix gains most where graph-structured missingness is challenging, without sacrificing competitiveness under U-MCAR.
} For each controlled dataset, at each missingness rate, we evaluate all methods over five
random seeds. The complete curves, including means and standard deviations at
each rate, are reported in \Cref{app:rt-results}. For compactness, in
\Cref{tab:rq1_injected} we summarize each curve by its trapezoidal area under
the curve (AUC) over the evaluated missingness rates. Higher AUC is better for
macro-F1, while lower AUC is better for MAE. Under RT, the setting PEMix is explicitly designed to address, PEMix obtains the best AUC on three of the four datasets: TADPOLE, GraphMSO, and PEMS-BAY, while ranking third on Engrad. The gains are particularly pronounced on TADPOLE and PEMS-BAY, where PEMix improves substantially over the strongest competing baselines. This supports the intended role of structural conditioning when missing observations form graph-localized regions and local propagation becomes less informative. Under U-MCAR, in contrast, we would not necessarily expect PEMix to dominate, since missing entries are scattered independently and propagation-based or simpler recovery mechanisms are not exposed to the same structural bottleneck. Nevertheless, PEMix remains competitive: it ranks first on TADPOLE and GraphMSO, second on PEMS-BAY, and third on EngRAD. Thus, the benefit of PEMix is concentrated in the challenging graph-structured regime for which it was designed, while its use does not generally entail a loss of competitiveness when missingness is unstructured.
\begin{table*}[t]
\centering
\caption{Performance under controlled RT and U-MCAR, reported as trapezoidal AUC over $\mu$. Blue indicates the best model in each column, orange the second one, and green the third one. Full tables in Appendix~\ref{app:rt-results}.}
\label{tab:rq1_injected}
\scriptsize
\setlength{\tabcolsep}{3pt}

\begin{tabular}{lcccccccc}
\toprule
& \multicolumn{4}{c}{RT} & \multicolumn{4}{c}{U-MCAR} \\
\cmidrule(lr){2-5}\cmidrule(lr){6-9}

Model 
& TADPOLE & GraphMSO & PEMS-BAY & Engrad
& TADPOLE & GraphMSO & PEMS-BAY & Engrad \\

& Macro-F1 $\uparrow$ 
& MAE $\downarrow$ 
& MAE $\downarrow$ 
& MAE $\downarrow$
& Macro-F1 $\uparrow$ 
& MAE $\downarrow$ 
& MAE $\downarrow$ 
& MAE $\downarrow$ \\

\midrule

GNN-Zero 
& \color{rankthree}0.7388 
& \color{rankthree}1.369627 
& 4.294239 
& 7.345723 
& 0.7565 
& \color{rankthree}1.368397 
& 4.059947 
& 5.796827 \\

GNN-Median 
& 0.7015 
& 1.369711 
& 4.340867 
& 7.317703 
& 0.7398 
& 1.368436 
& 4.038923 
& 5.859984 \\

GNN-MI 
& 0.7130 
& 1.369657 
& 4.251831 
& 7.322302 
& 0.7421 
& 1.368595 
& 4.079495 
& 5.816402 \\

FP 
& 0.7254 
& 1.371720 
& 4.203665 
& \color{ranktwo}6.515184 
& 0.7473 
& 1.370337 
& 4.035542 
& \color{rankone}\textbf{5.504311} \\

PCFI 
& 0.7272 
& 1.372185 
& 4.308706 
& 7.982851 
& \color{ranktwo}0.7593 
& 1.370070 
& 4.115370 
& 6.375060 \\

FISF 
& 0.7232 
& 1.373884 
& 4.237767 
& \color{rankone}\textbf{6.468120} 
& 0.7566 
& 1.372575 
& 4.071262 
& \color{ranktwo}5.603992 \\

GCNmf 
& 0.7148 
& \color{ranktwo}1.369023 
& \color{ranktwo}3.921914 
& 7.373967 
& \color{rankthree}0.7588 
& \color{ranktwo}1.367092 
& \color{rankone}\textbf{3.393461} 
& 6.060386 \\

GNN-MIM 
& \color{ranktwo}0.7469 
& 1.371282 
& \color{rankthree}4.193381 
& 7.584628 
& 0.7482 
& 1.369593 
& \color{rankthree}3.909064 
& 5.838703 \\

PEMix 
& \color{rankone}\textbf{0.8243} 
& \color{rankone}\textbf{1.368784} 
& \color{rankone}\textbf{3.764460} 
& \color{rankthree}7.206951 
& \color{rankone}\textbf{0.8094} 
& \color{rankone}\textbf{1.366935} 
& \color{ranktwo}3.463835 
& \color{rankthree}5.753445 \\

\bottomrule
\end{tabular}
\end{table*}

\textbf{RQ2: PEMix performs strongly under naturally occurring missingness.}

\begin{wraptable}{r}{0.58\textwidth}
\centering
\caption{Performance with naturally occurring missingness (test MAE, mean $\pm$ std over five seeds; lower is better). Blue, orange, and green mark the three best models within each dataset.}
\label{tab:native_forecasting}
\scriptsize
\setlength{\tabcolsep}{4pt}
\resizebox{\linewidth}{!}{%
\begin{tabular}{lccc}
\toprule
Model & AQI & PV-US & METR-LA \\
\midrule
GNN-Zero & $20.1895\pm0.2203$ & $12.4871\pm0.0151$ & $6.5199\pm0.0465$ \\
GNN-Median & $19.7317\pm0.2595$ & $12.3410\pm0.0289$ & $6.6117\pm0.0760$ \\
GNN-MI & $19.9730\pm0.1712$ & $12.4844\pm0.0337$ & \color{rankthree}$6.5197\pm0.0576$ \\
FP & \color{rankthree}$18.9450\pm0.0409$ & $12.5251\pm0.0290$ & $6.5945\pm0.0487$ \\
PCFI & $19.5384\pm0.0707$ & $12.9281\pm0.0698$ & $6.6484\pm0.0957$ \\
FISF & $19.0586\pm0.0778$ & $12.5746\pm0.0203$ & $6.5991\pm0.0721$ \\
GCNmf & \color{ranktwo}$18.9071\pm0.0460$ & \color{rankone}$\mathbf{11.9906\pm0.1444}$ & \color{ranktwo}$6.5170\pm0.0182$ \\
GNN-MIM & $19.4978\pm0.1793$ & \color{rankthree}$12.2302\pm0.0135$ & $6.7227\pm0.0778$ \\
PEMix & \color{rankone}$\mathbf{18.7274\pm0.0523}$ & \color{ranktwo}$12.0225\pm0.0392$ & \color{rankone}$\mathbf{6.4511\pm0.0105}$ \\
\bottomrule
\end{tabular}%
}
\end{wraptable}
Results on datasets with naturally occurring missingness are reported in Table~\ref{tab:native_forecasting}. PEMix achieves the lowest MAE on two of the three benchmarks, AQI and METR-LA, while ranking second on PV-US. This pattern is consistent with the spatial organization of missingness in these datasets. We quantify this structure through the excess co-missingness of neighboring nodes: intuitively, we compare how often the two endpoints of an edge are simultaneously missing with how often this would occur under spatially independent missingness. Values above one therefore indicate that missing observations cluster over the graph. The exact definition and dataset-level values are reported in Appendix~\ref{app:spatial-natural}. All three datasets exhibit co-missingness above the range expected under spatially independent missingness. The effect is stronger on METR-LA, whose ratio reaches 4.03 over timestamos with partial missingness, compared with 1.16 on AQI and 1.21 on PV-US. PEMix achieves the best MAE on AQI and METR-LA, whereas on PV-US GCNmf performs best and PEMix remains a close second.
The same trend extends beyond standard graph benchmarks: in the relational setting, RDL-PEMix obtains the best result on four of the six RelBench tasks reported in Table~\ref{tab:relbench_results}, suggesting that PEMix can be integrated into relational models as an effective strategy for handling missing features.

\begin{table*}[t]
\centering
\caption{RelBench results with naturally occurring missingness (mean $\pm$
std over three seeds; ten epochs). RDL-PEMix denotes the
relational adaptation of PEMix, while RDL-MIM is the missing-indicator
adaptation of RDL. Bold marks the best mean
within each task.}
\label{tab:relbench_results}
\small\setlength{\tabcolsep}{3pt}
\scriptsize
\renewcommand{\arraystretch}{1.0}
\begin{tabular}{llcccc}
\toprule
Dataset & Task & Metric & RDL & RDL-MIM & RDL-PEMix \\
\midrule
\multicolumn{6}{l}{\textbf{Node regression}} \\
\midrule
\textsc{rel-trial} & study-adverse & MAE$\downarrow$ & $44.470\pm0.210$ & $43.115\pm0.155$ & $\mathbf{42.840\pm0.680}$ \\
\textsc{rel-trial} & site-success & MAE$\downarrow$ & $0.400\pm0.020$ & $\mathbf{0.375\pm0.017}$ & $0.391\pm0.008$ \\
\textsc{rel-f1} & driver-position & MAE$\downarrow$ & $\mathbf{4.022\pm0.120}$ & $4.0271\pm 0.0389$ & $4.208\pm0.090$ \\
\midrule
\multicolumn{6}{l}{\textbf{Node classification}} \\
\midrule
\textsc{rel-trial} & study-outcome & AUC$\uparrow$ & $0.686\pm0.010$ & $0.6794\pm0.0051$ & $\mathbf{0.706\pm0.010}$ \\
\textsc{rel-f1} & driver-top3 & AUC$\uparrow$ & $0.755\pm0.006$ & $0.768\pm0.004$ & $\mathbf{0.773\pm0.010}$ \\
\textsc{rel-stack} & user-engagement & AUC$\uparrow$ & $0.9062\pm0.0002$ & $0.9060\pm0.0008$ & $\mathbf{0.9072\pm0.0001}$ \\
\bottomrule
\end{tabular}
\end{table*}

\textbf{RQ3: Structural conditioning, rather than access to PE alone, drives PEMix’s gains.}

\begin{wraptable}{r}{0.58\textwidth}
\centering
\caption{Effect of PE on TADPOLE and PV-US.
"Mechanism" denotes methods in which PE are used while handling
missing features, whereas "Downstream" denotes variants where PE
are concatenated only after missing-feature handling.
We report macro-F1 AUC across missingness rates for TADPOLE
(higher is better) and test MAE under natural missingness for PV-US
(lower is better). PV-US results are means $\pm$ std over five seeds.}
\label{tab:rq3_pe}
\scriptsize
\setlength{\tabcolsep}{4pt}
\resizebox{\linewidth}{!}{%
\begin{tabular}{llcc}
\toprule
PE & Model
& TADPOLE & PV-US \\
usage & & Macro-F1 AUC $\uparrow$ & MAE $\downarrow$ \\
\midrule
\multirow{4}{*}{Mechanism}
& PEMix
& \textbf{0.8243}
& $\mathbf{12.0225 \pm 0.0392}$ \\
& FP + PE
& 0.7336
& $12.4642 \pm 0.0221$ \\
& PCFI + PE
& 0.7474
& $12.8643 \pm 0.0454$ \\
& FISF + PE
& 0.7573
& $12.5981 \pm 0.0443$ \\
\midrule
\multirow{4}{*}{Downstream}
& GCNmf + PE
& 0.7211
& $12.1421 \pm 0.0145$ \\
& FP + PE
& 0.7754
& $12.1261 \pm 0.0452$ \\
& PCFI + PE
& 0.7745
& $12.6979 \pm 0.0790$ \\
& FISF + PE
& 0.7798
& $12.2364 \pm 0.0649$ \\
\midrule
-- & PE only
& 0.2923
& $16.5526 \pm 0.0401$ \\
\bottomrule
\end{tabular}%
}
\end{wraptable}
To isolate the role of positional encodings, we compare two alternatives.
First, we provide the same PE to propagation-based methods inside their missing-feature mechanism. Second, we concatenate PE only after missing-feature
handling, before the downstream predictor. We also include a PE-only control.
We conducted this ablation on the Tadpole and PV-US dataset. As in RQ1, for the Tadpole dataset we summarize each missingness curve by its macro-F1 AUC; complete results with average and standard deviations over five seeds for each level of missingness $\mu$ are reported in Appendix~\ref{app:ablations}. For PV-US we report means and standard deviations over five seeds. Results are reported in Table~\ref{tab:rq3_pe}. Simply providing positional encodings is not sufficient to reproduce PEMix’s performance. Adding the same PE to propagation-based missing-feature mechanisms leaves a substantial gap, while using PE only downstream improves performance on TADPOLE but remains below PEMix. The PE-only control performs poorly on both datasets. These results suggest that PEMix benefits specifically from using structural information to condition the missing-feature distribution, rather than from access to positional encodings per se.

\section{Conclusion}
We studied missing node features when missingness itself is structured by the graph. We formalized graph-structured mask dependence and introduced RT, a controlled mechanism that exposes a boundary bottleneck of propagation-based method. Motivated by this limitation, we proposed PEMix, which conditions a probabilistic model of incomplete node features on global structural representations. Across controlled and naturally occurring missingness, PEMix is particularly effective when missing observations are graph-localized, while remaining competitive under standard missingness. Ablations further show that the gains arise from using structural information to condition the missing-feature distribution, rather than from access to positional encodings alone. These results highlight graph structure as a useful source of information not only for prediction, but also for modeling incomplete node features. \textbf{Limitations:} PEMix assumes diagonal mixture covariances and a fully observed graph, and our evaluation is limited to node classification and forecasting. Broader analyses of structural encodings, temporal modeling, and other downstream tasks are left for future work.

\section*{Acknowledgments}
The work is partially funded by the Norwegian Research Council through \emph{RELAY: Relational Deep Learning for Energy Analytics} (project 345017), and the Centre of Excellence Integreat - The Norwegian Centre for knowledge-driven machine learning (project 332645). The authors gratefully thank NVIDIA Corporation for donating the GPUs used in this project.

\begingroup
\raggedright
\bibliographystyle{iclr2027_conference}
\bibliography{references}
\endgroup

\clearpage
\appendix
\section{Proofs}
\label{app:proofs}
\subsection{RT is graph-structured MCAR}
\noindent\textbf{Proposition~\ref{prop:rt_graph_structured_mcar} (RT is graph-structured MCAR), restated.}
If source selection and tie-breaking are independent of $\mathbf{X}$ and $\mathbf{Y}$ conditional on $\mathbf{A}$,
then RT is MCAR in the sense of~\eqref{eq:mcar}. Under the non-degeneracy and graph-support conditions stated in Appendix~\ref{app:proofs}, it also satisfies
Definition~\ref{def:graph_structured_dependence}.

\begin{proof}
We work conditional on the realized adjacency matrix $\mathbf A$ and fixed feature budgets, as in the main text. For the graph-structured conclusion, we make the necessary non-degeneracy conditions explicit: the graph is connected and undirected, the source distribution has full support on $V$, and some channel has $2\le m_j\le n-1$. For the topology comparison below, the complete graph and the star on $V$ are assumed to have positive probability under the distribution of $\mathbf A$. In comparing graphs, the source law and all budgets are held fixed. These conditions are needed for that conclusion: at zero or full missingness the mask is deterministic, and if all budgets are at most one, no growth step need consult an edge. Thus independence from the data alone proves MCAR, while the graph-structured assertion is understood in this non-degenerate regime.

\emph{MCAR.} Let $U$ collect the source and all tie-breaking variables. The recursion is a deterministic map $\mathbf M=F(\mathbf A,U)$: each update uses only adjacency, the current region, and $U$. Since $U$ is independent of $(\mathbf X,\mathbf Y)$ conditional on $\mathbf A$, for every mask $m$,
\[
\Pr(\mathbf M=m\mid\mathbf X,\mathbf Y,\mathbf A)
=\Pr(F(\mathbf A,U)=m\mid\mathbf A)
=\Pr(\mathbf M=m\mid\mathbf A).
\]
This is Equation~\ref{eq:mcar}; dependence within the mask does not affect this conditional independence from the data.

\emph{Internal dependence.} Fix a channel with $2\le m_j\le n-1$. Every node has positive probability of being masked, since it can be the source. The fixed budget implies $\sum_iM_{ij}=m_j<n$ almost surely, so at least one indicator also has positive probability of being zero. If the channel indicators were mutually independent, then
\[
0=\operatorname{Var}\!\left(\sum_iM_{ij}\mid\mathbf A\right)
=\sum_i\operatorname{Var}(M_{ij}\mid\mathbf A)>0,
\]
which is impossible. Independence of every entry of the full mask would imply independence within this channel. The conditional distribution $p(\mathbf M\mid\mathbf A)$ therefore does not factorize.

\emph{Dependence on topology.} Compare the complete graph with adjacency matrix $\mathbf A$ on the labeled set $V$ with a star centered at $a$ with adjacency matrix $\mathbf A'$, holding the full-support source law, budgets, and uniform tie-breaking rule fixed. Choose $S\subseteq V\setminus\{a\}$ with $|S|=m_j$. In the complete graph all boundary candidates have the same score. Starting in $S$ and then selecting its remaining nodes has positive probability, so $\Pr(C_j=S\mid\mathbf A)>0$. On the star every RT region is connected, because it starts with one node and grows by adding boundary nodes. A set of at least two leaves excluding the center is disconnected, so $\Pr(C_j=S\mid\mathbf A')=0$. The conditional channel marginals, and hence the joint mask distributions $p(\mathbf M\mid\mathbf A)$ and $p(\mathbf M\mid\mathbf A')$, differ. Together with internal dependence, this proves Definition~\ref{def:graph_structured_dependence} under the stated non-degeneracy conditions.
\end{proof}

\subsection{Boundary bottleneck for feature propagation}
\noindent\textbf{Proposition~\ref{prop:boundary_bottleneck} (Boundary bottleneck for feature propagation), restated.}
Let $\rho(\widehat{\mathbf{A}}_{MM})$ denote the spectral radius of
$\widehat{\mathbf{A}}_{MM}$ and assume
$\rho(\widehat{\mathbf{A}}_{MM})<1$. Then the iteration in
~\eqref{eq:fp_block} converges to
\begin{equation*}
    \widehat{x}_M
    =
    \mathbf{B}x_O,
    \qquad
    \mathbf{B}
    =
    \left(
        \mathbf{I}-\widehat{\mathbf{A}}_{MM}
    \right)^{-1}
    \widehat{\mathbf{A}}_{MO}.
\end{equation*}
Moreover,
\begin{equation*}
    \operatorname{rank}(\mathbf{B})
    \leq
    |\partial_O M|.
\end{equation*}

\begin{proof}
Conditional on the realized graph, the normalized adjacency is a fixed local propagation matrix: an off-diagonal entry can be nonzero only on a graph edge. Observations $x_O$ remain clamped. Set $T=\widehat{\mathbf A}_{MM}$ and $C=\widehat{\mathbf A}_{MO}$. For every integer $t\ge1$, induction on the update gives
\[
x_M^{(t)}=T^t x_M^{(0)}+\sum_{r=0}^{t-1}T^r Cx_O.
\]
Since $\rho(T)<1$, every Jordan block of $T^t$ tends to zero: its entries are finite sums of polynomial factors in $t$ multiplied by powers of eigenvalues of modulus less than one (nilpotent blocks vanish after finitely many powers). Consequently $T^t\to0$. Also $I-T$ is invertible, and the finite geometric-series identity gives
\[
\sum_{r=0}^{t-1}T^r=(I-T)^{-1}(I-T^t)\longrightarrow(I-T)^{-1}.
\]
For any finite initial vector, the initial-condition term vanishes and the limit is $(I-T)^{-1}Cx_O=\mathbf Bx_O$. Invertibility also makes this the unique fixed point.

If $u\in O\setminus\partial_O M$, there is no edge from a missing node to $u$, so locality gives $C_{:,u}=0$. Left multiplication by $(I-T)^{-1}$ preserves that zero column. Thus only observed boundary coordinates can influence the limit. Since $C$ has at most $|\partial_O M|$ nonzero columns,
\[
\operatorname{rank}(\mathbf B)=\operatorname{rank}(C)\le |\partial_O M|,
\]
where the equality follows from invertibility of $I-T$.
\end{proof}
The result concerns fixed linear, channel-wise propagation. It does not automatically extend to nonlinear or cross-channel recovery, and a low-rank recovery map alone does not imply positive error for every graph signal.


\section{Datasets and Evaluation Protocol}
\label{app:datasets}

\paragraph{Static formulation of forecasting datasets.} For AQI, PV-US, METR-LA, and GraphMSO, we convert each prediction time into an independent static graph sample. The graph topology is fixed over time and constructed from sensor locations, official distance files, or the GraphMSO generator, depending on the dataset; it never uses future observations. Node features contain only past sensor measurements, flattened over the input window, together with unmasked calendar covariates when available. Targets are the subsequent measurements. The input/output windows are 24/6 for AQI, 72/6 for PV-US, 24/12 for METR-LA, and 72/36 for GraphMSO. No temporal model is used: all methods operate on the resulting static graph samples with the same inputs.
To prevent leakage, train/validation/test splits are chronological and windows are constructed so that each input–target pair lies entirely within one split. Standardization and imputation statistics are estimated from observed training values only. Artificial masks are generated on the raw series before windowing, so repeated occurrences of the same measurement share the same missingness state. Naturally missing values are preserved, and missing targets are excluded from both loss and evaluation. PEMix positional encodings depend only on the fixed graph and are computed once per dataset; propagation-based imputers operate only on sensor measurements, not on calendar covariates.

\begin{table}[H]
\centering
\caption{Datasets used in the controlled and native-missingness experiments.
Controlled datasets are originally complete and missingness is injected
artificially. Native datasets contain missing values in the original data.
For native datasets, we report the percentage of missing entries in the
original data.}
\label{tab:datasets}
\small
\setlength{\tabcolsep}{5pt}
\begin{tabular}{llllc}
\toprule
Regime & Dataset & Domain & Source & Missing (\%) \\
\midrule
\multirow{5}{*}{Controlled}
& TADPOLE & clinical & \citet{zhu2019multi} & -- \\
& GraphMSO & graph forecasting & \citet{marisca2024graph} & -- \\
& PEMS-BAY & traffic forecasting & \citet{marisca2024graph} & -- \\
& Engrad & forecasting & \citet{marisca2024graph} & -- \\
\midrule
\multirow{6}{*}{Native}
& AQI & air-quality forecasting & \citet{marisca2024graph} & 25.67 \\
& PV-US & photovoltaic forecasting & \citet{marisca2024graph} & 54.02 \\
& METR-LA & traffic forecasting & \citet{marisca2024graph} & 8.11 \\
& RelBench (rel-f1) & relational databases & \citet{relbench} & 9.42 \\
& RelBench (rel-trial) & relational databases & \citet{relbench} & 23.90 \\
& RelBench (rel-stack) & relational databases & \citet{relbench} & 19.13 \\
\bottomrule
\end{tabular}
\end{table}

The controlled experiments comprise one classification dataset, namely TADPOLE, and
three forecasting datasets (GraphMSO, PEMS-BAY, and Engrad). Native missingness is evaluated on
AQI, PV-US, METR-LA, and the six RelBench tasks in Table~\ref{tab:relbench_results}.

Controlled curves are integrated with the trapezoidal rule over $\mu\in\{0,0.1,\dots,0.9,0.99\}$,
without division by 0.99. Classification results use five seeds (1, 43, 15, 118, 222), with the
same splits for every model and missingness mechanism; forecasting uses five test seeds per point.
Native forecasting uses five seeds, and RelBench uses three seeds and ten epochs. AUC differences
are reported descriptively; we do not perform significance tests.

\subsection{Spatial organization of missingness in real-world datasets}
\label{app:spatial-natural}

We test whether naturally occurring missing values fall on neighboring nodes more often than they
would if missingness were spatially independent.

\paragraph{Co-missingness ratio.}
Let $M_{tij}=1$ if feature $j$ of node $i$ is missing at timestamp $t$. For each feature $j$ and
timestamp $t$, let
\[
p_{tj}=\frac{1}{n}\sum_{i=1}^{n} M_{tij},
\qquad
h^{\mathrm{miss}}_{tj}=\frac{1}{|E|}\sum_{(u,v)\in E}\mathbb{I}\!\left[M_{tuj}=1 \land M_{tvj}=1\right]
\]
denote the missingness rate and the fraction of edges whose two endpoints are both missing.
If missingness were independent across nodes with the same rate $p_{tj}$, the expected fraction
of missing--missing edges would be $p_{tj}^{2}$. Over a set of timestamps $\mathcal{T}$, we
therefore define
\[
HC_{\mathrm{miss}}(j)=\frac{\sum_{t\in\mathcal{T}} h^{\mathrm{miss}}_{tj}}{\sum_{t\in\mathcal{T}} p_{tj}^{2}},
\]
a ratio of sums rather than an average of per-timestamp ratios. $HC_{\mathrm{miss}}=1$
corresponds to spatial independence, and values above $1$ indicate excess co-missingness between
neighboring nodes. Each dataset has a single measured channel (PM2.5 for AQI, power for PV-US,
and speed for METR-LA), so the dataset-level value is that of its channel.

\paragraph{Evaluation subsets.}
We compute the ratio over all timestamps and over \emph{partial} timestamps, those with
$0<p_{tj}<1$. Completely observed and completely missing snapshots carry no spatial contrast.
This distinction matters on METR-LA, where many missing values come from graph-wide outages.

\paragraph{Permutation control.}
To check that excess co-missingness reflects graph position rather than marginal missingness
rates, we randomly permute the complete missingness trajectories of nodes across graph positions.
This preserves temporal missingness patterns and the number of missing nodes at each timestamp,
while breaking their association with the topology. We use 999 permutations over all timestamps
and report the central $95\%$ range of the resulting null distribution.

\paragraph{Results.}
Table~\ref{tab:native_missingness_homophily} reports the results. On all three datasets, the
observed ratio lies above its permutation null range, so co-missingness is associated with graph
position. Over all timestamps the excess is largest on AQI; over partial timestamps it is largest
on METR-LA ($4.03$), followed by PV-US ($1.21$) and AQI ($1.16$).

\begin{table}[t]
\centering
\caption{Co-missingness ratio $HC_{\mathrm{miss}}$ on datasets with naturally occurring
missingness. Values above $1$ indicate that neighboring nodes are missing together more often
than under spatially independent missingness. The permutation null is the central $95\%$ range
over 999 node permutations, computed over all timestamps.}
\label{tab:native_missingness_homophily}
\small
\begin{tabular}{lccc}
\toprule
Dataset & All timestamps & Partial timestamps & Permutation null ($95\%$) \\
\midrule
AQI     & 1.15 & 1.16 & $[0.902,\,1.118]$ \\
PV-US   & 1.03 & 1.21 & $[0.999,\,1.001]$ \\
METR-LA & 1.04 & 4.03 & $[0.997,\,1.003]$ \\
\bottomrule
\end{tabular}
\end{table}

\section{Propagation Controls on TADPOLE}
\label{app:propagation-controls}
Tables~\ref{tab:tadpole-fp}--\ref{tab:tadpole-fisf} give the complete macro-F1 and classification AUROC results for Figure~\ref{fig:rt_umcar_smcar} (mean $\pm$ standard deviation over five seeds).

For a metric $s(\mu)$, we integrate its mean curve as $\sum_k(\mu_{k+1}-\mu_k)[s(\mu_k)+s(\mu_{k+1})]/2$ over $[0,0.99]$, without normalisation. Integrating the rounded table means gives average macro-F1 areas of $0.7252$, $0.7543$, and $0.7508$ under RT, U-MCAR, and S-MCAR, respectively. The corresponding areas under the classification-AUROC curves are $0.8855$, $0.8958$, and $0.8963$. These curve areas are distinct from the classification AUROC at a particular missingness rate.
\begin{table}[p]
\centering\footnotesize\setlength{\tabcolsep}{3pt}
\caption{Complete FP results on TADPOLE (five seeds). Higher macro-F1 and AUROC are better.}
\label{tab:tadpole-fp}
\begin{tabular}{rcccccc}
\toprule
& \multicolumn{3}{c}{Macro-F1} & \multicolumn{3}{c}{AUROC} \\
\cmidrule(lr){2-4}\cmidrule(lr){5-7}
$\mu$ (\%) & RT & U-MCAR & S-MCAR & RT & U-MCAR & S-MCAR \\
\midrule
0 & $0.806\pm0.065$ & $0.806\pm0.065$ & $0.806\pm0.065$ & $0.929\pm0.022$ & $0.929\pm0.022$ & $0.929\pm0.022$ \\
10 & $0.805\pm0.022$ & $0.836\pm0.038$ & $0.842\pm0.028$ & $0.932\pm0.013$ & $0.939\pm0.016$ & $0.939\pm0.019$ \\
20 & $0.813\pm0.035$ & $0.808\pm0.044$ & $0.797\pm0.038$ & $0.941\pm0.010$ & $0.934\pm0.021$ & $0.932\pm0.014$ \\
30 & $0.797\pm0.034$ & $0.798\pm0.037$ & $0.806\pm0.065$ & $0.935\pm0.015$ & $0.928\pm0.016$ & $0.924\pm0.031$ \\
40 & $0.802\pm0.054$ & $0.784\pm0.026$ & $0.726\pm0.086$ & $0.926\pm0.019$ & $0.924\pm0.013$ & $0.897\pm0.022$ \\
50 & $0.766\pm0.060$ & $0.782\pm0.051$ & $0.720\pm0.116$ & $0.917\pm0.019$ & $0.925\pm0.027$ & $0.906\pm0.026$ \\
60 & $0.774\pm0.075$ & $0.788\pm0.044$ & $0.781\pm0.040$ & $0.931\pm0.018$ & $0.921\pm0.031$ & $0.918\pm0.018$ \\
70 & $0.729\pm0.144$ & $0.735\pm0.048$ & $0.748\pm0.064$ & $0.893\pm0.091$ & $0.897\pm0.010$ & $0.908\pm0.014$ \\
80 & $0.676\pm0.120$ & $0.770\pm0.052$ & $0.721\pm0.066$ & $0.862\pm0.051$ & $0.915\pm0.028$ & $0.890\pm0.028$ \\
90 & $0.538\pm0.078$ & $0.695\pm0.060$ & $0.609\pm0.118$ & $0.811\pm0.068$ & $0.904\pm0.030$ & $0.872\pm0.049$ \\
99 & $0.392\pm0.074$ & $0.241\pm0.000$ & $0.316\pm0.103$ & $0.673\pm0.044$ & $0.492\pm0.027$ & $0.604\pm0.119$ \\
\bottomrule\end{tabular}\end{table}
\begin{table}[p]
\centering\footnotesize\setlength{\tabcolsep}{3pt}
\caption{Complete PCFI results on TADPOLE (five seeds). Higher macro-F1 and AUROC are better.}
\label{tab:tadpole-pcfi}
\begin{tabular}{rcccccc}
\toprule
& \multicolumn{3}{c}{Macro-F1} & \multicolumn{3}{c}{AUROC} \\
\cmidrule(lr){2-4}\cmidrule(lr){5-7}
$\mu$ (\%) & RT & U-MCAR & S-MCAR & RT & U-MCAR & S-MCAR \\
\midrule
0 & $0.796\pm0.054$ & $0.796\pm0.054$ & $0.796\pm0.054$ & $0.927\pm0.016$ & $0.927\pm0.016$ & $0.927\pm0.016$ \\
10 & $0.795\pm0.020$ & $0.810\pm0.046$ & $0.818\pm0.038$ & $0.928\pm0.008$ & $0.936\pm0.023$ & $0.938\pm0.015$ \\
20 & $0.825\pm0.032$ & $0.811\pm0.038$ & $0.833\pm0.021$ & $0.940\pm0.018$ & $0.938\pm0.020$ & $0.940\pm0.013$ \\
30 & $0.810\pm0.019$ & $0.813\pm0.043$ & $0.797\pm0.036$ & $0.938\pm0.017$ & $0.934\pm0.021$ & $0.928\pm0.022$ \\
40 & $0.805\pm0.047$ & $0.804\pm0.028$ & $0.822\pm0.047$ & $0.929\pm0.016$ & $0.933\pm0.011$ & $0.929\pm0.016$ \\
50 & $0.789\pm0.059$ & $0.800\pm0.035$ & $0.799\pm0.023$ & $0.922\pm0.021$ & $0.927\pm0.020$ & $0.931\pm0.017$ \\
60 & $0.778\pm0.078$ & $0.785\pm0.049$ & $0.798\pm0.016$ & $0.924\pm0.025$ & $0.922\pm0.024$ & $0.923\pm0.009$ \\
70 & $0.766\pm0.093$ & $0.774\pm0.036$ & $0.795\pm0.039$ & $0.910\pm0.046$ & $0.917\pm0.011$ & $0.921\pm0.013$ \\
80 & $0.601\pm0.166$ & $0.791\pm0.050$ & $0.727\pm0.117$ & $0.828\pm0.079$ & $0.926\pm0.024$ & $0.899\pm0.038$ \\
90 & $0.534\pm0.173$ & $0.734\pm0.046$ & $0.708\pm0.113$ & $0.762\pm0.207$ & $0.911\pm0.028$ & $0.912\pm0.018$ \\
99 & $0.435\pm0.067$ & $0.241\pm0.000$ & $0.372\pm0.125$ & $0.725\pm0.015$ & $0.487\pm0.105$ & $0.626\pm0.175$ \\
\bottomrule\end{tabular}\end{table}
\begin{table}[p]
\centering\footnotesize\setlength{\tabcolsep}{3pt}
\caption{Complete FISF results on TADPOLE (five seeds). Higher macro-F1 and AUROC are better.}
\label{tab:tadpole-fisf}
\begin{tabular}{rcccccc}
\toprule
& \multicolumn{3}{c}{Macro-F1} & \multicolumn{3}{c}{AUROC} \\
\cmidrule(lr){2-4}\cmidrule(lr){5-7}
$\mu$ (\%) & RT & U-MCAR & S-MCAR & RT & U-MCAR & S-MCAR \\
\midrule
0 & $0.806\pm0.040$ & $0.806\pm0.040$ & $0.806\pm0.040$ & $0.930\pm0.016$ & $0.930\pm0.016$ & $0.930\pm0.016$ \\
10 & $0.802\pm0.013$ & $0.825\pm0.058$ & $0.833\pm0.021$ & $0.932\pm0.015$ & $0.939\pm0.019$ & $0.940\pm0.014$ \\
20 & $0.809\pm0.041$ & $0.811\pm0.036$ & $0.805\pm0.042$ & $0.940\pm0.009$ & $0.932\pm0.017$ & $0.934\pm0.012$ \\
30 & $0.784\pm0.055$ & $0.795\pm0.045$ & $0.812\pm0.069$ & $0.930\pm0.018$ & $0.930\pm0.016$ & $0.924\pm0.028$ \\
40 & $0.808\pm0.049$ & $0.790\pm0.045$ & $0.828\pm0.032$ & $0.932\pm0.013$ & $0.933\pm0.014$ & $0.928\pm0.010$ \\
50 & $0.789\pm0.040$ & $0.797\pm0.035$ & $0.797\pm0.037$ & $0.924\pm0.011$ & $0.934\pm0.022$ & $0.919\pm0.027$ \\
60 & $0.822\pm0.044$ & $0.781\pm0.060$ & $0.791\pm0.042$ & $0.935\pm0.014$ & $0.915\pm0.023$ & $0.927\pm0.012$ \\
70 & $0.711\pm0.146$ & $0.747\pm0.079$ & $0.783\pm0.045$ & $0.905\pm0.047$ & $0.907\pm0.022$ & $0.920\pm0.023$ \\
80 & $0.614\pm0.145$ & $0.773\pm0.054$ & $0.765\pm0.045$ & $0.834\pm0.088$ & $0.925\pm0.021$ & $0.917\pm0.010$ \\
90 & $0.547\pm0.145$ & $0.774\pm0.061$ & $0.691\pm0.098$ & $0.801\pm0.062$ & $0.917\pm0.027$ & $0.887\pm0.044$ \\
99 & $0.380\pm0.137$ & $0.241\pm0.000$ & $0.356\pm0.158$ & $0.665\pm0.098$ & $0.476\pm0.066$ & $0.607\pm0.213$ \\
\bottomrule\end{tabular}\end{table}

\section{Complete results under controlled missingness}\label{app:rt-results}
The following tables report macro-F1 for TADPOLE and MAE for GraphMSO, PEMS-BAY, and EngRAD. Each dataset is split across two tables (0--50\% and 60--99\%) for legibility. hading marks the three best trapezoidal AUC values within each dataset.
\begin{table*}[p]
\centering
\caption{TADPOLE: complete macro-F1 results under RT (mean $\pm$ standard deviation over five seeds). Missingness $\mu$ is in percent; rates 0--50\%; AUC is reported in the continuation.}
\label{tab:rt-full-tadpole}
\footnotesize\setlength{\tabcolsep}{4pt}
\begin{adjustbox}{max width=\linewidth}
\begin{tabular}{lcccccc}
\toprule
Model & 0 & 10 & 20 & 30 & 40 & 50 \\
\midrule
 GNN-Zero & $0.825\pm0.038$ & $0.811\pm0.026$ & $0.807\pm0.052$ & $0.804\pm0.039$ & $0.815\pm0.034$ & $0.761\pm0.066$ \\
GNN-Median & $0.825\pm0.038$ & $0.797\pm0.030$ & $0.800\pm0.038$ & $0.789\pm0.070$ & $0.802\pm0.045$ & $0.782\pm0.055$ \\
GNN-MI & $0.825\pm0.038$ & $0.805\pm0.033$ & $0.811\pm0.044$ & $0.806\pm0.033$ & $0.812\pm0.051$ & $0.777\pm0.035$ \\
FP & $0.806\pm0.065$ & $0.805\pm0.022$ & $0.813\pm0.035$ & $0.797\pm0.034$ & $0.802\pm0.054$ & $0.766\pm0.060$ \\
PCFI & $0.796\pm0.054$ & $0.795\pm0.020$ & $0.825\pm0.032$ & $0.810\pm0.019$ & $0.805\pm0.047$ & $0.789\pm0.059$ \\
FISF & $0.806\pm0.040$ & $0.802\pm0.013$ & $0.809\pm0.041$ & $0.784\pm0.055$ & $0.808\pm0.049$ & $0.789\pm0.040$ \\
GCNmf & $0.814\pm0.043$ & $0.807\pm0.051$ & $0.818\pm0.071$ & $0.796\pm0.070$ & $0.836\pm0.056$ & $0.770\pm0.154$ \\
 GNN-MIM & $0.828\pm0.038$ & $0.801\pm0.052$ & $0.820\pm0.025$ & $0.797\pm0.054$ & $0.824\pm0.050$ & $0.798\pm0.057$ \\
 PEMix & $0.828\pm0.042$ & $0.864\pm0.041$ & $0.846\pm0.050$ & $0.838\pm0.050$ & $0.844\pm0.054$ & $0.826\pm0.028$ \\
\bottomrule
\end{tabular}
\end{adjustbox}
\end{table*}

\begin{table*}[p]
\centering
\caption{TADPOLE: complete macro-F1 results under RT (mean $\pm$ standard deviation over five seeds). Missingness $\mu$ is in percent; rates 60--99\% and trapezoidal AUC over $\mu\in[0,0.99]$ (higher is better).}
\label{tab:rt-full-tadpole-high}
\footnotesize\setlength{\tabcolsep}{4pt}
\begin{adjustbox}{max width=\linewidth}
\begin{tabular}{lcccccc}
\toprule
Model & 60 & 70 & 80 & 90 & 99 & AUC \\
\midrule
 GNN-Zero & $0.808\pm0.049$ & $0.788\pm0.083$ & $0.680\pm0.100$ & $0.534\pm0.079$ & $0.429\pm0.049$ & 0.7388 \\
GNN-Median & $0.764\pm0.063$ & $0.714\pm0.138$ & $0.589\pm0.165$ & $0.470\pm0.076$ & $0.267\pm0.057$ & 0.7015 \\
GNN-MI & $0.785\pm0.041$ & $0.749\pm0.101$ & $0.616\pm0.113$ & $0.473\pm0.075$ & $0.241\pm0.000$ & 0.7130 \\
FP & $0.774\pm0.075$ & $0.729\pm0.144$ & $0.676\pm0.120$ & $0.538\pm0.078$ & $0.392\pm0.074$ & 0.7254 \\
PCFI & $0.778\pm0.078$ & $0.766\pm0.093$ & $0.601\pm0.166$ & $0.534\pm0.173$ & $0.435\pm0.067$ & 0.7272 \\
FISF & $0.822\pm0.044$ & $0.711\pm0.146$ & $0.614\pm0.145$ & $0.547\pm0.145$ & $0.380\pm0.137$ & 0.7232 \\
GCNmf & $0.776\pm0.148$ & $0.721\pm0.151$ & $0.601\pm0.149$ & $0.482\pm0.022$ & $0.354\pm0.064$ & 0.7148 \\
 GNN-MIM & $0.823\pm0.045$ & $0.800\pm0.067$ & $0.695\pm0.087$ & $0.566\pm0.109$ & $0.354\pm0.066$ & 0.7469 \\
 PEMix & $0.839\pm0.045$ & $0.844\pm0.039$ & $0.808\pm0.043$ & $0.791\pm0.067$ & $0.822\pm0.047$ & \textbf{0.8243} \\
\bottomrule
\end{tabular}
\end{adjustbox}
\end{table*}

\begin{table*}[p]
\centering
\caption{TADPOLE: complete macro-F1 results under U-MCAR (mean $\pm$ standard deviation over five seeds). Missingness $\mu$ is in percent; rates 0--50\%; AUC is reported in the continuation.}
\label{tab:umcar-full-tadpole}
\footnotesize\setlength{\tabcolsep}{4pt}
\begin{adjustbox}{max width=\linewidth}
\begin{tabular}{lcccccc}
\toprule
Model & 0 & 10 & 20 & 30 & 40 & 50 \\
\midrule
GNN-Zero & $0.825\pm0.038$ & $0.824\pm0.033$ & $0.804\pm0.044$ & $0.804\pm0.027$ & $0.807\pm0.019$ & $0.771\pm0.045$ \\
GNN-Median & $0.825\pm0.038$ & $0.836\pm0.038$ & $0.791\pm0.030$ & $0.795\pm0.051$ & $0.801\pm0.040$ & $0.773\pm0.038$ \\
GNN-MI & $0.825\pm0.038$ & $0.821\pm0.035$ & $0.811\pm0.041$ & $0.798\pm0.032$ & $0.806\pm0.028$ & $0.763\pm0.034$ \\
FP & $0.806\pm0.065$ & $0.836\pm0.038$ & $0.808\pm0.044$ & $0.798\pm0.037$ & $0.784\pm0.026$ & $0.782\pm0.051$ \\
 PCFI & $0.796\pm0.054$ & $0.810\pm0.046$ & $0.811\pm0.038$ & $0.813\pm0.043$ & $0.804\pm0.028$ & $0.800\pm0.035$ \\
FISF & $0.806\pm0.040$ & $0.825\pm0.058$ & $0.811\pm0.036$ & $0.795\pm0.045$ & $0.790\pm0.045$ & $0.797\pm0.035$ \\
 GCNmf & $0.814\pm0.043$ & $0.831\pm0.058$ & $0.792\pm0.036$ & $0.806\pm0.042$ & $0.838\pm0.066$ & $0.812\pm0.019$ \\
GNN-MIM & $0.828\pm0.038$ & $0.838\pm0.044$ & $0.823\pm0.049$ & $0.806\pm0.028$ & $0.809\pm0.029$ & $0.779\pm0.058$ \\
 PEMix & $0.828\pm0.042$ & $0.828\pm0.049$ & $0.819\pm0.011$ & $0.843\pm0.050$ & $0.858\pm0.034$ & $0.826\pm0.041$ \\
\bottomrule
\end{tabular}
\end{adjustbox}
\end{table*}

\begin{table*}[p]
\centering
\caption{TADPOLE: complete macro-F1 results under U-MCAR (mean $\pm$ standard deviation over five seeds). Missingness $\mu$ is in percent; rates 60--99\% and trapezoidal AUC over $\mu\in[0,0.99]$ (higher is better).}
\label{tab:umcar-full-tadpole-high}
\footnotesize\setlength{\tabcolsep}{4pt}
\begin{adjustbox}{max width=\linewidth}
\begin{tabular}{lcccccc}
\toprule
Model & 60 & 70 & 80 & 90 & 99 & AUC \\
\midrule
GNN-Zero & $0.758\pm0.050$ & $0.743\pm0.067$ & $0.723\pm0.070$ & $0.723\pm0.046$ & $0.515\pm0.250$ & 0.7565 \\
GNN-Median & $0.771\pm0.038$ & $0.761\pm0.045$ & $0.738\pm0.036$ & $0.644\pm0.073$ & $0.241\pm0.000$ & 0.7398 \\
GNN-MI & $0.779\pm0.049$ & $0.762\pm0.063$ & $0.744\pm0.072$ & $0.647\pm0.107$ & $0.245\pm0.008$ & 0.7421 \\
FP & $0.788\pm0.044$ & $0.735\pm0.048$ & $0.770\pm0.052$ & $0.695\pm0.060$ & $0.241\pm0.000$ & 0.7473 \\
 PCFI & $0.785\pm0.049$ & $0.774\pm0.036$ & $0.791\pm0.050$ & $0.734\pm0.046$ & $0.241\pm0.000$ & 0.7593 \\
FISF & $0.781\pm0.060$ & $0.747\pm0.079$ & $0.773\pm0.054$ & $0.774\pm0.061$ & $0.241\pm0.000$ & 0.7566 \\
 GCNmf & $0.780\pm0.016$ & $0.736\pm0.051$ & $0.798\pm0.053$ & $0.675\pm0.073$ & $0.326\pm0.057$ & 0.7588 \\
GNN-MIM & $0.761\pm0.067$ & $0.752\pm0.060$ & $0.731\pm0.058$ & $0.696\pm0.041$ & $0.241\pm0.000$ & 0.7482 \\
 PEMix & $0.816\pm0.049$ & $0.796\pm0.041$ & $0.770\pm0.054$ & $0.817\pm0.017$ & $0.775\pm0.057$ & \textbf{0.8094} \\
\bottomrule
\end{tabular}
\end{adjustbox}
\end{table*}

%

\begin{table*}[p]
\centering
\caption{GraphMSO: complete MAE results under RT (five seeds). Missingness $\mu$ is in percent; Rates 0--50\%; AUC is reported in the continuation. Lower is better.}
\label{tab:rt-full-graphmso}
\footnotesize\setlength{\tabcolsep}{4pt}
\begin{adjustbox}{max width=\linewidth}
\begin{tabular}{lcccccc}
\toprule
Model & 0 & 10 & 20 & 30 & 40 & 50 \\
\midrule
GNN-Zero & $1.3637\pm0.0009$ & $1.3732\pm0.0014$ & $1.3775\pm0.0016$ & $1.3803\pm0.0011$ & $1.3835\pm0.0013$ & $1.3866\pm0.0011$ \\
 GNN-MI & $1.3604\pm0.0003$ & $1.3731\pm0.0014$ & $1.3775\pm0.0016$ & $1.3801\pm0.0007$ & $1.3835\pm0.0013$ & $1.3859\pm0.0013$ \\
GNN-Median & $1.3606\pm0.0008$ & $1.3732\pm0.0014$ & $1.3774\pm0.0014$ & $1.3803\pm0.0012$ & $1.3836\pm0.0013$ & $1.3859\pm0.0013$ \\
GNN-MIM & $1.3602\pm0.0007$ & $1.3752\pm0.0020$ & $1.3791\pm0.0017$ & $1.3818\pm0.0013$ & $1.3850\pm0.0018$ & $1.3882\pm0.0009$ \\
FP & $1.3583\pm0.0003$ & $1.3742\pm0.0012$ & $1.3783\pm0.0011$ & $1.3814\pm0.0007$ & $1.3842\pm0.0006$ & $1.3867\pm0.0011$ \\
PCFI & $1.3582\pm0.0005$ & $1.3729\pm0.0013$ & $1.3775\pm0.0011$ & $1.3819\pm0.0004$ & $1.3853\pm0.0010$ & $1.3880\pm0.0013$ \\
FISF & $1.3632\pm0.0006$ & $1.3758\pm0.0012$ & $1.3804\pm0.0013$ & $1.3849\pm0.0009$ & $1.3901\pm0.0013$ & $1.3906\pm0.0013$ \\
 GCNmf & $1.3583\pm0.0006$ & $1.3743\pm0.0013$ & $1.3760\pm0.0009$ & $1.3775\pm0.0006$ & $1.3804\pm0.0012$ & $1.3844\pm0.0003$ \\
 PEMix & $1.3578\pm0.0008$ & $1.3715\pm0.0010$ & $1.3770\pm0.0010$ & $1.3785\pm0.0005$ & $1.3813\pm0.0006$ & $1.3843\pm0.0013$ \\
\bottomrule
\end{tabular}
\end{adjustbox}
\end{table*}

\begin{table*}[p]
\centering
\caption{GraphMSO: complete MAE results under RT (five seeds). Missingness $\mu$ is in percent; Rates 60--99\% and trapezoidal AUC over $\mu\in[0,0.99]$ (lower is better).}
\label{tab:rt-full-graphmso-high}
\footnotesize\setlength{\tabcolsep}{4pt}
\begin{adjustbox}{max width=\linewidth}
\begin{tabular}{lcccccc}
\toprule
Model & 60 & 70 & 80 & 90 & 99 & AUC \\
\midrule
GNN-Zero & $1.3872\pm0.0012$ & $1.3893\pm0.0008$ & $1.3917\pm0.0007$ & $1.3941\pm0.0011$ & $1.3946\pm0.0003$ & 1.3696 \\
 GNN-MI & $1.3872\pm0.0011$ & $1.3894\pm0.0009$ & $1.3918\pm0.0006$ & $1.3942\pm0.0010$ & $1.3949\pm0.0002$ & 1.3696 \\
GNN-Median & $1.3872\pm0.0011$ & $1.3895\pm0.0008$ & $1.3918\pm0.0007$ & $1.3942\pm0.0009$ & $1.3949\pm0.0004$ & 1.3697 \\
GNN-MIM & $1.3912\pm0.0023$ & $1.3907\pm0.0007$ & $1.3935\pm0.0006$ & $1.3946\pm0.0005$ & $1.3950\pm0.0002$ & 1.3712 \\
FP & $1.3894\pm0.0010$ & $1.3925\pm0.0024$ & $1.3948\pm0.0009$ & $1.3977\pm0.0014$ & $1.3964\pm0.0006$ & 1.3717 \\
PCFI & $1.3915\pm0.0006$ & $1.3960\pm0.0011$ & $1.3994\pm0.0009$ & $1.4031\pm0.0019$ & $1.3976\pm0.0006$ & 1.3722 \\
FISF & $1.3943\pm0.0008$ & $1.3975\pm0.0021$ & $1.3976\pm0.0020$ & $1.3981\pm0.0010$ & $1.3967\pm0.0025$ & 1.3738 \\
 GCNmf & $1.3868\pm0.0006$ & $1.3889\pm0.0004$ & $1.3916\pm0.0005$ & $1.3936\pm0.0012$ & $1.3945\pm0.0004$ & 1.3690 \\
 PEMix & $1.3867\pm0.0007$ & $1.3887\pm0.0004$ & $1.3919\pm0.0006$ & $1.3936\pm0.0010$ & $1.3944\pm0.0003$ & \textbf{1.3687} \\
\bottomrule
\end{tabular}
\end{adjustbox}
\end{table*}


\begin{table*}[p]
\centering
\caption{EngRAD: complete MAE results under RT shared-seed missingness
(five seeds, mean $\pm$ sample standard deviation).
Missingness $\mu$ is in percent; rates 0--50\%.
AUC is reported in the continuation. Lower is better.}
\label{tab:rt-full-engrad}
\footnotesize\setlength{\tabcolsep}{4pt}
\begin{adjustbox}{max width=\linewidth}
\begin{tabular}{lcccccc}
\toprule
Model & 0 & 10 & 20 & 30 & 40 & 50 \\
\midrule
GNN-Zero
& $5.4762\pm0.0290$ & $5.8960\pm0.0709$
& $6.3288\pm0.0976$ & $6.7150\pm0.0982$
& $7.0647\pm0.0668$ & $7.4673\pm0.0701$ \\
GNN-MI
& $5.4364\pm0.0528$ & $5.7982\pm0.0400$
& $6.3365\pm0.0596$ & $6.6810\pm0.0875$
& $7.0631\pm0.0963$ & $7.4423\pm0.0760$ \\
GNN-Median
& $5.4321\pm0.0377$ & $5.7998\pm0.0483$
& $6.2912\pm0.0845$ & $6.7284\pm0.1058$
& $7.0481\pm0.0850$ & $7.3866\pm0.0979$ \\
GNN-MIM
& $5.4174\pm0.0395$ & $5.8991\pm0.0407$
& $6.3634\pm0.1025$ & $6.8770\pm0.1471$
& $7.4792\pm0.1344$ & $7.8061\pm0.0907$ \\
 FP
& $5.4181\pm0.0207$ & $5.4721\pm0.0229$
& $5.7357\pm0.1045$ & $5.8706\pm0.0598$
& $6.2543\pm0.1555$ & $6.5903\pm0.2538$ \\
PCFI
& $5.4377\pm0.0635$ & $5.8268\pm0.1653$
& $6.3194\pm0.2407$ & $7.0365\pm0.1464$
& $7.6782\pm0.4432$ & $8.4610\pm0.4943$ \\
 FISF
& $5.4015\pm0.0128$ & $5.5008\pm0.0281$
& $5.7462\pm0.0232$ & $5.8516\pm0.0819$
& $6.0954\pm0.1340$ & $6.3255\pm0.1357$ \\
GCNmf
& $5.6397\pm0.0942$ & $5.9493\pm0.1136$
& $6.4047\pm0.1022$ & $6.7618\pm0.0715$
& $7.1063\pm0.0705$ & $7.5757\pm0.1372$ \\
 PEMix
& $5.3736\pm0.0194$ & $5.7791\pm0.0780$
& $6.1751\pm0.0635$ & $6.7031\pm0.0929$
& $7.0101\pm0.0703$ & $7.1536\pm0.0951$ \\
\bottomrule
\end{tabular}
\end{adjustbox}
\end{table*}

\begin{table*}[p]
\centering
\caption{EngRAD: complete MAE results under RT shared-seed missingness
(five seeds, mean $\pm$ sample standard deviation).
Missingness $\mu$ is in percent; rates 60--99\%.
Trapezoidal AUC is computed over $\mu\in[0,0.99]$
for each seed and reported as mean $\pm$ sample standard deviation.
Lower is better.}
\label{tab:rt-full-engrad-high}
\footnotesize\setlength{\tabcolsep}{4pt}
\resizebox{\textwidth}{!}{%
\begin{adjustbox}{max width=\linewidth}
\begin{tabular}{lcccccc}
\toprule
Model & 60 & 70 & 80 & 90 & 99 & AUC \\
\midrule
GNN-Zero
& $7.8648\pm0.1059$ & $8.0482\pm0.1109$
& $8.4417\pm0.0630$ & $9.0358\pm0.0620$
& $9.5747\pm0.0534$ & $7.345719\pm0.036379$ \\
GNN-MI
& $7.8264\pm0.0940$ & $8.0547\pm0.1006$
& $8.4494\pm0.0871$ & $9.0045\pm0.0743$
& $9.5532\pm0.0396$ & $7.322295\pm0.034537$ \\
GNN-Median
& $7.8062\pm0.1106$ & $8.0017\pm0.1033$
& $8.4592\pm0.0601$ & $9.0617\pm0.0710$
& $9.6248\pm0.0392$ & $7.317701\pm0.046298$ \\
GNN-MIM
& $8.1090\pm0.0876$ & $8.4326\pm0.1660$
& $8.9219\pm0.0680$ & $9.2955\pm0.0545$
& $9.8190\pm0.1070$ & $7.584642\pm0.046850$ \\
 FP
& $6.8111\pm0.1469$ & $6.9421\pm0.1430$
& $7.2574\pm0.1736$ & $7.6992\pm0.1177$
& $9.3221\pm0.0448$ & $6.515191\pm0.097710$ \\
PCFI
& $8.9782\pm0.6702$ & $9.7153\pm0.6322$
& $10.0908\pm0.4348$ & $9.4165\pm0.4403$
& $9.0173\pm0.0836$ & $7.982851\pm0.241905$ \\
 FISF
& $6.6222\pm0.1130$ & $6.7754\pm0.1147$
& $7.1723\pm0.1351$ & $7.7738\pm0.1836$
& $10.0132\pm0.3947$
& $\mathbf{6.468120\pm0.074761}$ \\
GCNmf
& $7.7186\pm0.2210$ & $8.0195\pm0.1684$
& $8.5161\pm0.0955$ & $8.9464\pm0.0412$
& $9.7083\pm0.0460$ & $7.373967\pm0.043182$ \\
 PEMix
& $7.6081\pm0.2022$ & $7.7819\pm0.1405$
& $8.3722\pm0.0844$ & $8.9978\pm0.1453$
& $9.4479\pm0.0578$ & $7.206951\pm0.047572$ \\
\bottomrule
\end{tabular}
\end{adjustbox}}
\end{table*}


\begin{table*}[p]
\centering
\caption{PEMS-BAY: complete MAE results under RT (five seeds). Missingness $\mu$ is in percent; Rates 0--50\%; AUC is reported in the continuation. Lower is better.}
\label{tab:rt-full-pemsbay}
\footnotesize\setlength{\tabcolsep}{4pt}
\begin{adjustbox}{max width=\linewidth}
\begin{tabular}{lcccccc}
\toprule
Model & 0 & 10 & 20 & 30 & 40 & 50 \\
\midrule
GNN-Zero & $3.8743\pm0.0892$ & $3.8786\pm0.0594$ & $3.9786\pm0.0951$ & $4.0962\pm0.1045$ & $4.3490\pm0.1824$ & $4.3858\pm0.3289$ \\
GNN-MI & $3.9270\pm0.1181$ & $3.9083\pm0.0807$ & $3.9255\pm0.0690$ & $4.0772\pm0.0740$ & $4.3852\pm0.1936$ & $4.3559\pm0.1290$ \\
GNN-Median & $3.7836\pm0.0515$ & $3.8937\pm0.0691$ & $4.0551\pm0.1404$ & $4.0427\pm0.0756$ & $4.2222\pm0.0705$ & $4.6794\pm0.4123$ \\
 GNN-MIM & $3.8048\pm0.0902$ & $3.7776\pm0.0335$ & $3.8063\pm0.0525$ & $3.9676\pm0.0963$ & $4.1550\pm0.1220$ & $4.4299\pm0.2171$ \\
FP & $3.7935\pm0.0726$ & $3.8333\pm0.0828$ & $3.9304\pm0.1112$ & $4.0447\pm0.1111$ & $4.0875\pm0.1562$ & $4.2809\pm0.1367$ \\
PCFI & $3.8095\pm0.0597$ & $3.9127\pm0.0745$ & $3.9554\pm0.0530$ & $4.1170\pm0.0442$ & $4.3933\pm0.0864$ & $4.4917\pm0.0734$ \\
FISF & $3.8053\pm0.0601$ & $3.8330\pm0.0324$ & $4.0053\pm0.1443$ & $4.0055\pm0.1086$ & $4.0493\pm0.0758$ & $4.3540\pm0.2148$ \\
 GCNmf & $3.7825\pm0.0618$ & $3.7070\pm0.0604$ & $3.6741\pm0.0860$ & $3.7045\pm0.0971$ & $3.7178\pm0.1043$ & $3.8272\pm0.1811$ \\
 PEMix & $3.7540\pm0.0145$ & $3.7058\pm0.0441$ & $3.6933\pm0.0484$ & $3.6723\pm0.0371$ & $3.6425\pm0.0474$ & $3.7244\pm0.0398$ \\
\bottomrule
\end{tabular}
\end{adjustbox}
\end{table*}

\begin{table*}[p]
\centering
\caption{PEMS-BAY: complete MAE results under RT (five seeds). Missingness $\mu$ is in percent; Rates 60--99\% and trapezoidal AUC over $\mu\in[0,0.99]$ (lower is better).}
\label{tab:rt-full-pemsbay-high}
\footnotesize\setlength{\tabcolsep}{4pt}
\begin{adjustbox}{max width=\linewidth}
\begin{tabular}{lcccccc}
\toprule
Model & 60 & 70 & 80 & 90 & 99 & AUC \\
\midrule
GNN-Zero & $4.5305\pm0.1284$ & $4.7491\pm0.1542$ & $4.6543\pm0.1910$ & $4.5401\pm0.0334$ & $4.6001\pm0.0208$ & 4.2942 \\
GNN-MI & $4.3936\pm0.0560$ & $4.5632\pm0.1432$ & $4.5425\pm0.1582$ & $4.5522\pm0.0445$ & $4.6196\pm0.0056$ & 4.2518 \\
GNN-Median & $4.5532\pm0.1190$ & $4.9191\pm0.2908$ & $4.6654\pm0.1563$ & $4.5790\pm0.0628$ & $4.7467\pm0.2893$ & 4.3408 \\
 GNN-MIM & $4.4871\pm0.1917$ & $4.4527\pm0.1908$ & $4.5787\pm0.0771$ & $4.5532\pm0.1272$ & $4.5577\pm0.0298$ & 4.1934 \\
FP & $4.3485\pm0.1574$ & $4.5187\pm0.2141$ & $4.5679\pm0.0786$ & $4.6860\pm0.1439$ & $4.6140\pm0.0440$ & 4.2037 \\
PCFI & $4.5410\pm0.1166$ & $4.6474\pm0.0918$ & $4.5403\pm0.0502$ & $4.7431\pm0.0705$ & $4.6168\pm0.0224$ & 4.3087 \\
FISF & $4.4999\pm0.0960$ & $4.5837\pm0.1526$ & $4.6479\pm0.0790$ & $4.6430\pm0.0494$ & $4.6346\pm0.0639$ & 4.2378 \\
 GCNmf & $3.8703\pm0.1338$ & $4.1159\pm0.1294$ & $4.3221\pm0.1160$ & $4.5483\pm0.1632$ & $4.5958\pm0.0562$ & 3.9219 \\
 PEMix & $3.7331\pm0.0510$ & $3.8074\pm0.0430$ & $3.9609\pm0.0684$ & $4.0777\pm0.1368$ & $4.3424\pm0.0328$ & \textbf{3.7645} \\
\bottomrule
\end{tabular}
\end{adjustbox}
\end{table*}


\begin{table*}[p]
\centering
\caption{EngRAD: complete MAE results under U-MCAR (mean $\pm$ standard deviation over five seeds). Missingness $\mu$ is in percent; rates 0--50\%; AUC is reported in the continuation. Lower is better.}
\label{tab:umcar-full-engrad}
\footnotesize\setlength{\tabcolsep}{4pt}
\begin{adjustbox}{max width=\linewidth}
\begin{tabular}{lcccccc}
\toprule
Model & 0 & 10 & 20 & 30 & 40 & 50 \\
\midrule
 GNN-Zero & $5.462\pm0.038$ & $5.491\pm0.047$ & $5.560\pm0.051$ & $5.610\pm0.026$ & $5.686\pm0.050$ & $5.721\pm0.086$ \\
GNN-Median & $5.412\pm0.030$ & $5.496\pm0.030$ & $5.621\pm0.046$ & $5.668\pm0.066$ & $5.773\pm0.049$ & $5.840\pm0.036$ \\
GNN-MI & $5.395\pm0.037$ & $5.509\pm0.029$ & $5.589\pm0.055$ & $5.649\pm0.049$ & $5.675\pm0.046$ & $5.721\pm0.030$ \\
 FP & $5.417\pm0.028$ & $5.404\pm0.023$ & $5.434\pm0.013$ & $5.444\pm0.021$ & $5.459\pm0.051$ & $5.429\pm0.022$ \\
PCFI & $5.416\pm0.029$ & $5.604\pm0.032$ & $5.771\pm0.026$ & $5.996\pm0.042$ & $6.062\pm0.124$ & $6.303\pm0.090$ \\
 FISF & $5.439\pm0.053$ & $5.416\pm0.046$ & $5.453\pm0.049$ & $5.431\pm0.035$ & $5.473\pm0.049$ & $5.499\pm0.056$ \\
GCNmf & $5.616\pm0.040$ & $5.635\pm0.061$ & $5.895\pm0.059$ & $5.895\pm0.057$ & $5.994\pm0.041$ & $5.816\pm0.047$ \\
GNN-MIM & $5.419\pm0.049$ & $5.536\pm0.030$ & $5.607\pm0.051$ & $5.683\pm0.109$ & $5.723\pm0.056$ & $5.784\pm0.093$ \\
PEMix & $5.370 \pm 0.024$ & $5.507 \pm 0.046$ & $5.515 \pm 0.019$ & $5.564 \pm 0.035$ & $5.604 \pm 0.028$ & $5.662 \pm 0.034$ \\
\bottomrule
\end{tabular}
\end{adjustbox}
\end{table*}

\begin{table*}[p]
\centering
\caption{EngRAD: complete MAE results under U-MCAR (mean $\pm$ standard deviation over five seeds). Missingness $\mu$ is in percent; rates 60--99\% and trapezoidal AUC over $\mu\in[0,0.99]$ (lower is better).}
\label{tab:umcar-full-engrad-high}
\footnotesize\setlength{\tabcolsep}{4pt}
\begin{adjustbox}{max width=\linewidth}
\begin{tabular}{lcccccc}
\toprule
Model & 60 & 70 & 80 & 90 & 99 & AUC \\
\midrule
 GNN-Zero & $5.842\pm0.107$ & $5.910\pm0.036$ & $5.995\pm0.044$ & $6.277\pm0.017$ & $7.687\pm0.076$ & 5.7968 \\
GNN-Median & $5.904\pm0.046$ & $5.989\pm0.056$ & $6.095\pm0.040$ & $6.328\pm0.041$ & $7.769\pm0.046$ & 5.8600 \\
GNN-MI & $5.868\pm0.115$ & $5.929\pm0.023$ & $6.111\pm0.087$ & $6.265\pm0.025$ & $7.696\pm0.066$ & 5.8164 \\
 FP & $5.472\pm0.028$ & $5.487\pm0.034$ & $5.545\pm0.031$ & $5.770\pm0.087$ & $7.065\pm0.176$ & \textbf{5.5043} \\
PCFI & $6.408\pm0.058$ & $6.645\pm0.079$ & $6.905\pm0.046$ & $7.407\pm0.080$ & $9.586\pm0.077$ & 6.3751 \\
 FISF & $5.558\pm0.028$ & $5.637\pm0.062$ & $5.707\pm0.046$ & $6.113\pm0.077$ & $7.422\pm0.053$ & 5.6040 \\
GCNmf & $6.142\pm0.067$ & $6.107\pm0.079$ & $6.213\pm0.124$ & $6.786\pm0.114$ & $8.117\pm0.099$ & 6.0604 \\
GNN-MIM & $5.801\pm0.058$ & $5.919\pm0.200$ & $6.066\pm0.091$ & $6.334\pm0.126$ & $7.870\pm0.086$ & 5.8387 \\
PEMix & $5.772 \pm 0.055$ & $5.861 \pm 0.042$ & $5.980 \pm 0.036$ & $6.224 \pm 0.029$ & $7.714 \pm 0.164$ & 5.7534 \\
\bottomrule
\end{tabular}
\end{adjustbox}
\end{table*}


\begin{table*}[p]
\centering
\caption{GraphMSO: complete MAE results under U-MCAR (mean $\pm$ standard deviation over five seeds). Missingness $\mu$ is in percent; rates 0--50\%; AUC is reported in the continuation. Lower is better.}
\label{tab:umcar-full-graphmso}
\footnotesize\setlength{\tabcolsep}{4pt}
\begin{adjustbox}{max width=\linewidth}
\begin{tabular}{lcccccc}
\toprule
Model & 0 & 10 & 20 & 30 & 40 & 50 \\
\midrule
 GNN-Zero & $1.360\pm0.000$ & $1.373\pm0.001$ & $1.377\pm0.001$ & $1.379\pm0.001$ & $1.381\pm0.001$ & $1.383\pm0.001$ \\
GNN-Median & $1.360\pm0.001$ & $1.373\pm0.001$ & $1.377\pm0.001$ & $1.379\pm0.001$ & $1.381\pm0.001$ & $1.383\pm0.001$ \\
GNN-MI & $1.363\pm0.001$ & $1.373\pm0.001$ & $1.377\pm0.001$ & $1.379\pm0.001$ & $1.381\pm0.001$ & $1.383\pm0.001$ \\
FP & $1.358\pm0.001$ & $1.374\pm0.001$ & $1.378\pm0.002$ & $1.380\pm0.001$ & $1.383\pm0.001$ & $1.385\pm0.001$ \\
PCFI & $1.358\pm0.001$ & $1.373\pm0.001$ & $1.377\pm0.001$ & $1.379\pm0.001$ & $1.382\pm0.001$ & $1.384\pm0.001$ \\
FISF & $1.363\pm0.001$ & $1.375\pm0.001$ & $1.379\pm0.001$ & $1.381\pm0.001$ & $1.384\pm0.001$ & $1.387\pm0.002$ \\
 GCNmf & $1.358\pm0.001$ & $1.372\pm0.001$ & $1.374\pm0.001$ & $1.376\pm0.001$ & $1.379\pm0.001$ & $1.381\pm0.002$ \\
GNN-MIM & $1.361\pm0.001$ & $1.375\pm0.001$ & $1.378\pm0.001$ & $1.381\pm0.001$ & $1.383\pm0.001$ & $1.384\pm0.001$ \\
 PEMix & $1.358\pm0.001$ & $1.371\pm0.002$ & $1.374\pm0.001$ & $1.376\pm0.002$ & $1.378\pm0.000$ & $1.382\pm0.001$ \\
\bottomrule
\end{tabular}
\end{adjustbox}
\end{table*}

\begin{table*}[p]
\centering
\caption{GraphMSO: complete MAE results under U-MCAR (mean $\pm$ standard deviation over five seeds). Missingness $\mu$ is in percent; rates 60--99\% and trapezoidal AUC over $\mu\in[0,0.99]$ (lower is better).}
\label{tab:umcar-full-graphmso-high}
\footnotesize\setlength{\tabcolsep}{4pt}
\begin{adjustbox}{max width=\linewidth}
\begin{tabular}{lcccccc}
\toprule
Model & 60 & 70 & 80 & 90 & 99 & AUC \\
\midrule
 GNN-Zero & $1.385\pm0.001$ & $1.387\pm0.001$ & $1.389\pm0.001$ & $1.392\pm0.001$ & $1.396\pm0.000$ & 1.3684 \\
GNN-Median & $1.385\pm0.001$ & $1.387\pm0.001$ & $1.389\pm0.001$ & $1.392\pm0.001$ & $1.396\pm0.000$ & 1.3684 \\
GNN-MI & $1.385\pm0.001$ & $1.387\pm0.001$ & $1.389\pm0.001$ & $1.392\pm0.001$ & $1.396\pm0.001$ & 1.3686 \\
FP & $1.387\pm0.002$ & $1.389\pm0.001$ & $1.393\pm0.001$ & $1.395\pm0.002$ & $1.398\pm0.001$ & 1.3703 \\
PCFI & $1.386\pm0.002$ & $1.388\pm0.001$ & $1.393\pm0.002$ & $1.397\pm0.002$ & $1.406\pm0.002$ & 1.3701 \\
FISF & $1.388\pm0.002$ & $1.392\pm0.001$ & $1.396\pm0.002$ & $1.401\pm0.002$ & $1.403\pm0.002$ & 1.3726 \\
 GCNmf & $1.384\pm0.001$ & $1.387\pm0.001$ & $1.389\pm0.001$ & $1.392\pm0.001$ & $1.395\pm0.000$ & 1.3671 \\
GNN-MIM & $1.385\pm0.002$ & $1.388\pm0.001$ & $1.390\pm0.000$ & $1.393\pm0.001$ & $1.396\pm0.000$ & 1.3696 \\
 PEMix & $1.384\pm0.001$ & $1.386\pm0.001$ & $1.389\pm0.001$ & $1.392\pm0.001$ & $1.396\pm0.001$ & \textbf{1.3669} \\
\bottomrule
\end{tabular}
\end{adjustbox}
\end{table*}


\begin{table*}[p]
\centering
\caption{PEMS-BAY: complete MAE results under U-MCAR (mean $\pm$ standard deviation over five seeds). Missingness $\mu$ is in percent; rates 0--50\%; AUC is reported in the continuation. Lower is better.}
\label{tab:umcar-full-pems-bay}
\footnotesize\setlength{\tabcolsep}{4pt}
\begin{adjustbox}{max width=\linewidth}
\begin{tabular}{lcccccc}
\toprule
Model & 0 & 10 & 20 & 30 & 40 & 50 \\
\midrule
GNN-Zero & $3.852\pm0.074$ & $3.901\pm0.070$ & $3.882\pm0.058$ & $4.060\pm0.132$ & $4.184\pm0.194$ & $4.201\pm0.088$ \\
GNN-Median & $3.832\pm0.049$ & $3.833\pm0.045$ & $3.907\pm0.034$ & $3.937\pm0.042$ & $4.030\pm0.123$ & $4.081\pm0.037$ \\
GNN-MI & $3.895\pm0.095$ & $3.801\pm0.031$ & $3.914\pm0.141$ & $4.068\pm0.084$ & $4.088\pm0.062$ & $4.262\pm0.074$ \\
FP & $3.799\pm0.079$ & $3.862\pm0.024$ & $3.968\pm0.040$ & $3.925\pm0.049$ & $3.975\pm0.054$ & $3.982\pm0.035$ \\
PCFI & $3.797\pm0.075$ & $3.905\pm0.093$ & $3.981\pm0.096$ & $4.021\pm0.141$ & $4.053\pm0.065$ & $4.089\pm0.041$ \\
FISF & $3.814\pm0.053$ & $3.804\pm0.020$ & $3.922\pm0.024$ & $3.857\pm0.021$ & $3.969\pm0.037$ & $4.041\pm0.063$ \\
 GCNmf & $3.776\pm0.068$ & $3.316\pm0.058$ & $3.278\pm0.027$ & $3.266\pm0.026$ & $3.294\pm0.034$ & $3.279\pm0.025$ \\
 GNN-MIM & $3.830\pm0.065$ & $3.815\pm0.073$ & $3.894\pm0.103$ & $3.928\pm0.029$ & $3.849\pm0.041$ & $3.933\pm0.046$ \\
 PEMix & $3.753\pm0.010$ & $3.424\pm0.027$ & $3.395\pm0.021$ & $3.357\pm0.021$ & $3.390\pm0.024$ & $3.435\pm0.051$ \\
\bottomrule
\end{tabular}
\end{adjustbox}
\end{table*}

\begin{table*}[p]
\centering
\caption{PEMS-BAY: complete MAE results under U-MCAR (mean $\pm$ standard deviation over five seeds). Missingness $\mu$ is in percent; rates 60--99\% and trapezoidal AUC over $\mu\in[0,0.99]$ (lower is better).}
\label{tab:umcar-full-pems-bay-high}
\footnotesize\setlength{\tabcolsep}{4pt}
\begin{adjustbox}{max width=\linewidth}
\begin{tabular}{lcccccc}
\toprule
Model & 60 & 70 & 80 & 90 & 99 & AUC \\
\midrule
GNN-Zero & $4.155\pm0.058$ & $4.104\pm0.135$ & $4.124\pm0.131$ & $4.215\pm0.166$ & $4.574\pm0.049$ & 4.0600 \\
GNN-Median & $4.248\pm0.115$ & $4.037\pm0.091$ & $4.220\pm0.087$ & $4.386\pm0.086$ & $4.474\pm0.068$ & 4.0389 \\
GNN-MI & $4.101\pm0.084$ & $4.300\pm0.056$ & $4.359\pm0.153$ & $4.138\pm0.120$ & $4.500\pm0.059$ & 4.0795 \\
FP & $4.108\pm0.102$ & $4.148\pm0.055$ & $4.290\pm0.078$ & $4.380\pm0.057$ & $4.527\pm0.012$ & 4.0356 \\
PCFI & $4.154\pm0.061$ & $4.260\pm0.097$ & $4.395\pm0.067$ & $4.518\pm0.108$ & $4.677\pm0.047$ & 4.1154 \\
FISF & $4.075\pm0.035$ & $4.228\pm0.064$ & $4.414\pm0.112$ & $4.608\pm0.167$ & $4.708\pm0.055$ & 4.0713 \\
 GCNmf & $3.333\pm0.038$ & $3.375\pm0.011$ & $3.497\pm0.021$ & $3.656\pm0.038$ & $4.299\pm0.054$ & \textbf{3.3935} \\
 GNN-MIM & $3.956\pm0.030$ & $3.977\pm0.048$ & $4.004\pm0.066$ & $4.047\pm0.020$ & $4.386\pm0.058$ & 3.9091 \\
 PEMix & $3.437\pm0.029$ & $3.477\pm0.052$ & $3.564\pm0.076$ & $3.672\pm0.094$ & $3.988\pm0.046$ & 3.4638 \\
\bottomrule
\end{tabular}
\end{adjustbox}
\end{table*}

\clearpage

\section{Implementation and Reproducibility Details}

\label{app:implementation}
This appendix records optimization, covariance assumptions, hyperparameter ranges, and computational complexity. These details are separated from the conceptual method in \Cref{sec:pemix} so the main text follows the mechanism-to-model argument.

\paragraph{Fully-missing nodes.}
When $\mathcal{O}_i=\emptyset$, the feature contribution to the responsibility
vanishes, while the positional term remains active. Consequently, the
imputation can remain node-specific when the PE-conditioned responsibilities distinguish nodes.

\paragraph{Learning the mixture.}
The mixture parameters $(\pi_k,\mu_k,\sigma_k^2)$ are \emph{not} fixed by a
separate EM fit; they are trainable and optimised end-to-end with the GCN
weights $W^{(1)},W^{(2)}$ by minimising the cross-entropy of the
node-classification loss. We initialise them with a few EM iterations on the
mean-imputed extended matrix $Z=[\widetilde X\mid\PE]$, which gives a sensible
starting partition, and then let backpropagation refine them jointly with the
classifier. The mixing weights are kept on the simplex by an unconstrained
softmax re-parameterisation, and the variances are kept positive by optimising
$\log\sigma_k^2$.

\paragraph{Diagonal covariances.}
We use diagonal covariances $\mathrm{diag}(\sigma_k^2)$ throughout. This is
what makes the marginalisation of the missing coordinates closed-form and the
expected ReLU in the probabilistic first layer analytic; full covariances would couple observed and missing coordinates and require more expensive conditional-moment computations, although Gaussian conditioning and scalar expected ReLUs remain analytic. The diagonal assumption is the
main modelling limitation of \GCNmfPE.

\paragraph{Hyperparameters.}
For all competing methods, we follow the hyperparameter search spaces and model-selection
procedures of~\cite{ferrini2026rethinking} and of the original methods, using
the official implementations when available. Model selection is performed exclusively on the
validation set, without access to test performance.

For the controlled-missingness experiments, hyperparameter selection is performed independently
for RT and U-MCAR. Within each mechanism, candidate configurations are evaluated over the full
missingness curve and selected according to validation AUC. The selected network and optimization hyperparameters are reported for
TADPOLE, EngRAD, GraphMSO, and PEMS-BAY in
Tables~\ref{tab:hyperparameters_tadpole},
\ref{tab:hyperparameters_engrad},
\ref{tab:hyperparameters_graphmso}, and
\ref{tab:hyperparameters_pems_bay}, respectively.

For datasets with naturally occurring missingness, no additional missingness is injected and
hyperparameters are selected directly according to validation MAE. The selected configurations
for AQI, PV-US, and METR-LA are reported in
Tables~\ref{tab:hyperparameters_aqi_natural_1},
\ref{tab:hyperparameters_pv_us_natural_1}, and
\ref{tab:hyperparameters_metr_la_natural_1}, respectively.

For PEMix and GCNmf, the GMM-specific hyperparameters are reported separately. In the TADPOLE
experiments, PEMix uses $K=5$ mixture components and PE dimension $q=8$, fixed across both
missingness mechanisms and all missingness rates. For EngRAD, GraphMSO, and PEMS-BAY, the
selected GMM configurations are reported in
Tables~\ref{tab:hyperparameters_engrad_gmm},
\ref{tab:hyperparameters_graphmso_gmm}, and
\ref{tab:hyperparameters_pems_bay_gmm}, respectively. The corresponding settings for AQI,
PV-US, and METR-LA are reported in
Tables~\ref{tab:hyperparameters_aqi_gmm},
\ref{tab:hyperparameters_pv_us_gmm}, and
\ref{tab:hyperparameters_metr_la_gmm}.

\begin{table*}[t]
\centering
\caption{Hyperparameter configurations selected for TADPOLE. Hyperparameters are selected separately for RT and U-MCAR using validation performance, and the selected configuration is then kept fixed across all missingness rates within each mechanism.}
\label{tab:hyperparameters_tadpole}
\small
\setlength{\tabcolsep}{4pt}
\resizebox{\textwidth}{!}{%
\begin{tabular}{llcccccc}
\toprule
Mechanism & Method & Hidden & Dropout & LR & Weight decay & Max epochs & Patience \\
\midrule
RT & GNN-Zero & 128 & 0.5 & 0.01 & 0 & 500 & 50 \\
RT & GNN-Median & 128 & 0.5 & 0.01 & 0 & 500 & 50 \\
RT & GNN-MI & 128 & 0.5 & 0.01 & 0 & 500 & 50 \\
RT & GNN-MIM & 128 & 0.5 & 0.01 & 0 & 500 & 50 \\
RT & FP & 64 & 0.5 & 0.01 & 0 & 1000 & 40 \\
RT & PCFI & 64 & 0.5 & 0.01 & 0 & 1000 & 40 \\
RT & FISF & 64 & 0.5 & 0.01 & 0 & 1000 & 40 \\
RT & GCNmf & 16 & 0 & 0.01 & $10^{-5}$ & 1000 & 40 \\
RT & PEMix & 16 & 0 & 0.005 & 0 & 500 & 50 \\
\midrule
U-MCAR & GNN-Zero & 128 & 0.5 & 0.01 & 0 & 500 & 50 \\
U-MCAR & GNN-Median & 128 & 0.5 & 0.01 & 0 & 500 & 50 \\
U-MCAR & GNN-MI & 128 & 0.5 & 0.01 & 0 & 500 & 50 \\
U-MCAR & GNN-MIM & 128 & 0.5 & 0.01 & 0 & 500 & 50 \\
U-MCAR & FP & 64 & 0.5 & 0.01 & 0 & 1000 & 40 \\
U-MCAR & PCFI & 64 & 0.5 & 0.01 & 0 & 1000 & 40 \\
U-MCAR & FISF & 64 & 0.5 & 0.01 & 0 & 1000 & 40 \\
U-MCAR & GCNmf & 16 & 0 & 0.01 & $10^{-5}$ & 1000 & 40 \\
U-MCAR & PEMix & 16 & 0 & 0.005 & 0 & 500 & 50 \\
\bottomrule
\end{tabular}%
}
\end{table*}

\begin{table*}[t]
\centering
\caption{Hyperparameter configurations selected for EngRAD. Hyperparameters are selected separately for RT and U-MCAR using validation performance, and the selected configuration is then kept fixed across all missingness rates within each mechanism.}
\label{tab:hyperparameters_engrad}
\small
\setlength{\tabcolsep}{4pt}
\resizebox{\textwidth}{!}{%
\begin{tabular}{llcccccc}
\toprule
Mechanism & Method & Hidden & Dropout & LR & Weight decay & Max epochs & Patience \\
\midrule
RT & GNN-Zero & 128 & 0.5 & 0.001 & 0 & 500 & 50 \\
RT & GNN-Median & 128 & 0.3 & 0.001 & 0 & 500 & 50 \\
RT & GNN-MI & 128 & 0.5 & 0.001 & 0 & 500 & 50 \\
RT & GNN-MIM & 128 & 0.3 & 0.001 & 0 & 500 & 50 \\
RT & FP & 64 & 0.3 & 0.001 & 0 & 1000 & 40 \\
RT & PCFI & 64 & 0.5 & 0.001 & 0 & 1000 & 40 \\
RT & FISF & 64 & 0.5 & 0.01 & 0 & 1000 & 40 \\
RT & GCNmf & 16 & 0.5 & 0.003 & 0 & 1000 & 40 \\
RT & PEMix & 32 & 0.3 & 0.001 & 0 & 500 & 50 \\
\midrule
U-MCAR & GNN-Zero & 128 & 0.5 & 0.001 & 0 & 500 & 50 \\
U-MCAR & GNN-Median & 128 & 0.3 & 0.001 & 0 & 500 & 50 \\
U-MCAR & GNN-MI & 128 & 0.5 & 0.001 & 0 & 500 & 50 \\
U-MCAR & GNN-MIM & 128 & 0.1 & 0.001 & 0 & 500 & 50 \\
U-MCAR & FP & 64 & 0.5 & 0.003 & 0 & 1000 & 40 \\
U-MCAR & PCFI & 64 & 0.5 & 0.001 & 0 & 1000 & 40 \\
U-MCAR & FISF & 64 & 0.5 & 0.001 & 0 & 1000 & 40 \\
U-MCAR & GCNmf & 16 & 0.5 & 0.003 & 0 & 1000 & 40 \\
U-MCAR & PEMix & 16 & 0.5 & 0.001 & 0 & 500 & 50 \\
\bottomrule
\end{tabular}%
}
\end{table*}

\begin{table*}[t]
\centering
\caption{Hyperparameter configurations selected for GraphMSO. Hyperparameters are selected separately for RT and U-MCAR using validation performance, and the selected configuration is then kept fixed across all missingness rates within each mechanism.}
\label{tab:hyperparameters_graphmso}
\small
\setlength{\tabcolsep}{4pt}
\resizebox{\textwidth}{!}{%
\begin{tabular}{llcccccc}
\toprule
Mechanism & Method & Hidden & Dropout & LR & Weight decay & Max epochs & Patience \\
\midrule
RT & GNN-Zero & 128 & 0.5 & 0.003 & 0 & 500 & 50 \\
RT & GNN-Median & 128 & 0.5 & 0.01 & 0 & 500 & 50 \\
RT & GNN-MI & 128 & 0.5 & 0.003 & 0 & 500 & 50 \\
RT & GNN-MIM & 128 & 0.1 & 0.003 & 0 & 500 & 50 \\
RT & FP & 64 & 0.5 & 0.01 & 0 & 1000 & 40 \\
RT & PCFI & 64 & 0.5 & 0.01 & 0 & 1000 & 40 \\
RT & FISF & 64 & 0.5 & 0.01 & 0 & 1000 & 40 \\
RT & GCNmf & 16 & 0.5 & 0.01 & 0 & 1000 & 40 \\
RT & PEMix & 16 & 0.5 & 0.01 & 0 & 500 & 50 \\
\midrule
U-MCAR & GNN-Zero & 128 & 0.5 & 0.003 & 0 & 500 & 50 \\
U-MCAR & GNN-Median & 128 & 0.5 & 0.01 & 0 & 500 & 50 \\
U-MCAR & GNN-MI & 128 & 0.5 & 0.01 & 0 & 500 & 50 \\
U-MCAR & GNN-MIM & 128 & 0.3 & 0.003 & 0 & 500 & 50 \\
U-MCAR & FP & 64 & 0.5 & 0.01 & 0 & 1000 & 40 \\
U-MCAR & PCFI & 64 & 0.5 & 0.01 & 0 & 1000 & 40 \\
U-MCAR & FISF & 64 & 0.5 & 0.01 & 0 & 1000 & 40 \\
U-MCAR & GCNmf & 16 & 0 & 0.01 & 0 & 1000 & 40 \\
U-MCAR & PEMix & 16 & 0.5 & 0.01 & 0 & 500 & 50 \\
\bottomrule
\end{tabular}%
}
\end{table*}

\begin{table*}[t]
\centering
\caption{Hyperparameter configurations selected for PEMS-BAY. Hyperparameters are selected separately for RT and U-MCAR using validation performance, and the selected configuration is then kept fixed across all missingness rates within each mechanism.}
\label{tab:hyperparameters_pems_bay}
\small
\setlength{\tabcolsep}{4pt}
\resizebox{\textwidth}{!}{%
\begin{tabular}{llcccccc}
\toprule
Mechanism & Method & Hidden & Dropout & LR & Weight decay & Max epochs & Patience \\
\midrule
RT & GNN-Zero & 128 & 0 & 0.001 & 0 & 500 & 50 \\
RT & GNN-Median & 128 & 0 & 0.003 & 0 & 500 & 50 \\
RT & GNN-MI & 128 & 0.3 & 0.001 & 0 & 500 & 50 \\
RT & GNN-MIM & 128 & 0 & 0.001 & 0 & 500 & 50 \\
RT & FP & 64 & 0.3 & 0.003 & 0 & 1000 & 40 \\
RT & PCFI & 64 & 0.1 & 0.003 & 0 & 1000 & 40 \\
RT & FISF & 64 & 0.5 & 0.001 & 0 & 1000 & 40 \\
RT & GCNmf & 16 & 0 & 0.01 & 0 & 1000 & 40 \\
RT & PEMix & 16 & 0.5 & 0.01 & 0 & 500 & 50 \\
\midrule
U-MCAR & GNN-Zero & 128 & 0.5 & 0.003 & 0 & 500 & 50 \\
U-MCAR & GNN-Median & 128 & 0.1 & 0.001 & 0 & 500 & 50 \\
U-MCAR & GNN-MI & 128 & 0.5 & 0.003 & 0 & 500 & 50 \\
U-MCAR & GNN-MIM & 128 & 0 & 0.001 & 0 & 500 & 50 \\
U-MCAR & FP & 64 & 0 & 0.003 & 0 & 1000 & 40 \\
U-MCAR & PCFI & 64 & 0.1 & 0.001 & 0 & 1000 & 40 \\
U-MCAR & FISF & 64 & 0.1 & 0.001 & 0 & 1000 & 40 \\
U-MCAR & GCNmf & 16 & 0 & 0.01 & 0 & 1000 & 40 \\
U-MCAR & PEMix & 16 & 0.5 & 0.001 & 0 & 500 & 50 \\
\bottomrule
\end{tabular}%
}
\end{table*}

\begin{table*}[t]
\centering
\caption{Hyperparameter configurations selected for AQI under natural missingness. Configurations are selected using validation MAE; no additional missingness is injected. GMM settings are reported separately.}
\label{tab:hyperparameters_aqi_natural_1}
\small
\setlength{\tabcolsep}{4pt}
\resizebox{\textwidth}{!}{%
\begin{tabular}{lcccccc}
\toprule
Method & Hidden & Dropout & LR & Weight decay & Max epochs & Patience \\
\midrule
GNN-Zero & 128 & 0.1 & 0.01 & 0 & 500 & 50 \\
\addlinespace[2pt]
GNN-Median & 128 & 0.3 & 0.01 & 0 & 500 & 50 \\
\addlinespace[2pt]
GNN-MI & 128 & 0.5 & 0.01 & 0 & 500 & 50 \\
\addlinespace[2pt]
GNN-MIM & 128 & 0.5 & 0.01 & 0 & 500 & 50 \\
\addlinespace[2pt]
FP & 64 & 0.3 & 0.001 & 0 & 1000 & 40 \\
\addlinespace[2pt]
PCFI & 64 & 0.5 & 0.003 & 0 & 1000 & 40 \\
\addlinespace[2pt]
FISF & 64 & 0.1 & 0.01 & 0 & 1000 & 40 \\
\addlinespace[2pt]
GCNmf & 16 & 0 & 0.01 & 0 & 1000 & 40 \\
\addlinespace[2pt]
PEMix & 16 & 0.5 & 0.003 & 0 & 500 & 50 \\
\bottomrule
\end{tabular}%
}
\end{table*}

\begin{table*}[t]
\centering
\caption{Hyperparameter configurations selected for PV-US under natural missingness. Configurations are selected using validation MAE; no additional missingness is injected. GMM settings are reported separately.}
\label{tab:hyperparameters_pv_us_natural_1}
\small
\setlength{\tabcolsep}{4pt}
\resizebox{\textwidth}{!}{%
\begin{tabular}{lcccccc}
\toprule
Method & Hidden & Dropout & LR & Weight decay & Max epochs & Patience \\
\midrule
GNN-Zero & 128 & 0.5 & 0.003 & 0 & 500 & 50 \\
\addlinespace[2pt]
GNN-Median & 128 & 0 & 0.003 & 0 & 500 & 50 \\
\addlinespace[2pt]
GNN-MI & 128 & 0.3 & 0.003 & 0 & 500 & 50 \\
\addlinespace[2pt]
GNN-MIM & 128 & 0.5 & 0.001 & 0 & 500 & 50 \\
\addlinespace[2pt]
FP & 64 & 0.3 & 0.003 & 0 & 1000 & 40 \\
\addlinespace[2pt]
PCFI & 64 & 0.5 & 0.003 & 0 & 1000 & 40 \\
\addlinespace[2pt]
FISF & 64 & 0.3 & 0.001 & 0 & 1000 & 40 \\
\addlinespace[2pt]
GCNmf & 16 & 0 & 0.01 & 0 & 1000 & 40 \\
\addlinespace[2pt]
PEMix & 16 & 0.5 & 0.01 & 0 & 500 & 50 \\
\bottomrule
\end{tabular}%
}
\end{table*}

\begin{table*}[t]
\centering
\caption{Hyperparameter configurations selected for METR-LA under natural missingness. Configurations are selected using validation MAE; no additional missingness is injected. GMM settings are reported separately.}
\label{tab:hyperparameters_metr_la_natural_1}
\small
\setlength{\tabcolsep}{4pt}
\resizebox{\textwidth}{!}{%
\begin{tabular}{lcccccc}
\toprule
Method & Hidden & Dropout & LR & Weight decay & Max epochs & Patience \\
\midrule
GNN-Zero & 128 & 0 & 0.001 & 0 & 500 & 50 \\
\addlinespace[2pt]
GNN-Median & 128 & 0.1 & 0.001 & 0 & 500 & 50 \\
\addlinespace[2pt]
GNN-MI & 128 & 0 & 0.001 & 0 & 500 & 50 \\
\addlinespace[2pt]
GNN-MIM & 128 & 0.3 & 0.003 & 0 & 500 & 50 \\
\addlinespace[2pt]
FP & 64 & 0 & 0.003 & 0 & 1000 & 40 \\
\addlinespace[2pt]
PCFI & 64 & 0 & 0.003 & 0 & 1000 & 40 \\
\addlinespace[2pt]
FISF & 64 & 0 & 0.003 & 0 & 1000 & 40 \\
\addlinespace[2pt]
GCNmf & 16 & 0 & 0.001 & 0 & 1000 & 40 \\
\addlinespace[2pt]
PEMix & 16 & 0.5 & 0.01 & 0 & 500 & 50 \\
\bottomrule
\end{tabular}%
}
\end{table*}


\begin{table*}[t]
\centering
\caption{GMM settings selected for EngRAD. $K$ is the number of mixture components and $q$ is the PE dimension; GCNmf does not use PE ($q=0$).}
\label{tab:hyperparameters_engrad_gmm}
\small
\setlength{\tabcolsep}{7pt}
\begin{tabular}{llccl}
\toprule
Mechanism & Method & $K$ & $q$ & Likelihood \\
\midrule
RT & GCNmf & 5 & 0 & legacy \\
RT & PEMix & 8 & 16 & gaussian \\
\midrule
U-MCAR & GCNmf & 5 & 0 & legacy \\
U-MCAR & PEMix & 5 & 8 & gaussian \\
\bottomrule
\end{tabular}
\end{table*}

\begin{table*}[t]
\centering
\caption{GMM settings selected for GraphMSO. $K$ is the number of mixture components and $q$ is the PE dimension; GCNmf does not use PE ($q=0$).}
\label{tab:hyperparameters_graphmso_gmm}
\small
\setlength{\tabcolsep}{7pt}
\begin{tabular}{llccl}
\toprule
Mechanism & Method & $K$ & $q$ & Likelihood \\
\midrule
RT & GCNmf & 5 & 0 & legacy \\
RT & PEMix & 3 & 16 & gaussian \\
\midrule
U-MCAR & GCNmf & 5 & 0 & gaussian \\
U-MCAR & PEMix & 2 & 8 & gaussian \\
\bottomrule
\end{tabular}
\end{table*}

\begin{table*}[t]
\centering
\caption{GMM settings selected for PEMS-BAY. $K$ is the number of mixture components and $q$ is the PE dimension; GCNmf does not use PE ($q=0$).}
\label{tab:hyperparameters_pems_bay_gmm}
\small
\setlength{\tabcolsep}{7pt}
\begin{tabular}{llccl}
\toprule
Mechanism & Method & $K$ & $q$ & Likelihood \\
\midrule
RT & GCNmf & 5 & 0 & legacy \\
RT & PEMix & 5 & 16 & gaussian \\
\midrule
U-MCAR & GCNmf & 5 & 0 & legacy \\
U-MCAR & PEMix & 5 & 8 & gaussian \\
\bottomrule
\end{tabular}
\end{table*}

\begin{table*}[t]
\centering
\caption{GMM settings selected for AQI. $K$ is the number of mixture components and $q$ is the PE dimension; GCNmf does not use PE ($q=0$).}
\label{tab:hyperparameters_aqi_gmm}
\small
\setlength{\tabcolsep}{7pt}
\begin{tabular}{lccl}
\toprule
Method & $K$ & $q$ & Likelihood \\
\midrule
GCNmf & 5 & 0 & legacy \\
PEMix & 5 & 8 & gaussian \\
\bottomrule
\end{tabular}
\end{table*}

\begin{table*}[t]
\centering
\caption{GMM settings selected for PV-US. $K$ is the number of mixture components and $q$ is the PE dimension; GCNmf does not use PE ($q=0$).}
\label{tab:hyperparameters_pv_us_gmm}
\small
\setlength{\tabcolsep}{7pt}
\begin{tabular}{lccl}
\toprule
Method & $K$ & $q$ & Likelihood \\
\midrule
GCNmf & 5 & 0 & gaussian \\
PEMix & 3 & 4 & gaussian \\
\bottomrule
\end{tabular}
\end{table*}

\begin{table*}[t]
\centering
\caption{GMM settings selected for METR-LA. $K$ is the number of mixture components and $q$ is the PE dimension; GCNmf does not use PE ($q=0$).}
\label{tab:hyperparameters_metr_la_gmm}
\small
\setlength{\tabcolsep}{7pt}
\begin{tabular}{lccl}
\toprule
Method & $K$ & $q$ & Likelihood \\
\midrule
GCNmf & 5 & 0 & gaussian \\
PEMix & 5 & 16 & gaussian \\
\bottomrule
\end{tabular}
\end{table*}

\clearpage

\subsection{Computational Complexity}
\label{app:complexity}

\paragraph{Notation and conventions.}
Let $n$ be the number of nodes, $m=\operatorname{nnz}(\hat{\mathbf A})\ge n$ the
number of non-zeros of the propagation operator (both edge directions and
self-loops), $d$ the input dimension, $h$ the hidden dimension, $c$ the output
dimension, $K$ the number of mixture components, $q$ the PE dimension, and $R$
the number of training epochs. All costs refer to a single input graph with
missingness mask $\mathbf M$. All models share the same backbone. For a
two-layer GCN that applies the linear map before propagation, one full-batch
forward (and, up to a constant, backward) pass costs
\begin{equation}
\mathcal C_{\mathrm B}(d)=O(ndh+mh+nhc+mc).
\end{equation}
We report separately (i) the cost of handling missing features, which for all
baselines depends on $\mathbf M$ and must be recomputed whenever the mask
changes, and (ii) the per-epoch cost of the learned model.

\paragraph{Imputation baselines.}
Zero, mean, and median imputation require one pass over the feature matrix,
$O(nd)$ (median via linear-time selection; sorting-based implementations cost
$O(nd\log n)$). GNN-MIM additionally concatenates the mask, so its first layer
acts on $2d$ inputs and costs $O(2ndh+mh)$; only the feature-transform term
doubles. FP performs $T_{\mathrm{FP}}$ sparse propagations, each followed by
re-injection of the observed values, for $O(T_{\mathrm{FP}}\,dm)$ time and
$O(m+nd)$ memory. PCFI additionally computes channel-wise shortest-path
distances to observed nodes ($O(dm)$ via multi-source BFS), diffuses each
channel with a channel-specific confidence-weighted operator
($O(T_{\mathrm{PC}}\,dm)$ time and $O(dm)$ memory when the $d$ operators are
materialized), and applies one inter-channel step based on the $d\times d$
channel correlation matrix, which costs $O(nd^2)$.
FISF runs a pre-diffusion on all channels ($T_1$ iterations), selects the
$\rho d$ channels with the lowest variance, and re-diffuses them ($T_2$
iterations) after recomputing their distances, giving
$O((T_1+\rho T_2)\,dm)$.
After imputation, training costs $R\,\mathcal C_{\mathrm B}(d)$.

\paragraph{Mixture-based models.}
GCNmf and \MODEL{} handle missing features inside every forward pass. For each
component $k$, the first layer propagates the component-wise mean and variance
matrices through $\hat{\mathbf A}$ and $\hat{\mathbf A}\odot\hat{\mathbf A}$,
evaluates the expected ReLU, and mixes the results with the responsibilities
$\gamma_{ik}$. The per-epoch cost is
\begin{equation}
\mathcal C_{\mathrm{mix}}
=O\big(K(ndh+mh)+nK(d+q)\big)+O(nhc+mc)
=O\big(K(ndh+mh)+nKq\big)+O(nhc+mc),
\end{equation}
with $q=0$ for GCNmf. The positional encodings enter only the responsibilities
and are never propagated, so $\mathbf W^{(1)}\in\mathbb R^{d\times h}$ and the
message-passing cost is identical to GCNmf. The additional term $O(nKq)$ is
dominated by $O(Kndh)$ whenever $q\le dh$, which holds in all our experiments;
\MODEL{} therefore has the same per-epoch complexity as GCNmf. Both are
approximately $K$ times more expensive than the backbone's first layer and
store $O(Knh)$ first-layer activations for backpropagation. Mixture parameters
are initialized by $I$ EM iterations on $[\tilde{\mathbf X}\Vert\mathbf P]$,
costing $O(I\,nK(d+q))$ for diagonal covariances.

\paragraph{Positional encodings.}
\MODEL{} computes the $q{+}1$ smallest eigenpairs of the normalized Laplacian
with a sparse iterative eigensolver. Each Lanczos/LOBPCG iteration costs
$O(mq+nq^2)$, while the number of iterations depends on the tolerance and on
the spectral gap, so we do not state a closed-form bound. This computation
depends only on $\mathbf A$: it is performed once per graph and shared across
all missingness rates, seeds, and masks. In contrast, the preprocessing of the
baselines is recomputed for every mask.

\begin{table}[ht]
\centering
\caption{Computational cost per input graph. Mask-dependent costs are
recomputed for every missingness pattern. $\dagger$: $O(nd\log n)$ with
sorting. $\ddagger$: implementation-dependent (see text).}
\label{tab:complexity}
\scriptsize\setlength{\tabcolsep}{2pt}
\begin{tabular}{llll}
\toprule
Method & Mask-dependent one-off & Per epoch & Extra memory \\
\midrule
GNN-Zero / Mean & $O(nd)$ & $\mathcal C_{\mathrm B}(d)$ & -- \\
GNN-Median & $O(nd)^{\dagger}$ & $\mathcal C_{\mathrm B}(d)$ & -- \\
GNN-MIM & $O(nd)$ & $\mathcal C_{\mathrm B}(d)+O(ndh)$ & $O(nd)$ \\
FP & $O(T_{\mathrm{FP}}\,dm)$ & $\mathcal C_{\mathrm B}(d)$ & $O(nd)$ \\
PCFI & $O(T_{\mathrm{PC}}\,dm+nd^2)^{\ddagger}$ & $\mathcal C_{\mathrm B}(d)$ & $O(dm+d^2)^{\ddagger}$ \\
FISF & $O((T_1+\rho T_2)\,dm)^{\ddagger}$ & $\mathcal C_{\mathrm B}(d)$ & $O(dm)^{\ddagger}$ \\
GCNmf & $O(I\,nKd)$ & $O(K(ndh+mh))+O(nhc+mc)$ & $O(Knh)$ \\
\MODEL{} & $O(I\,nK(d+q))$ & $O(K(ndh+mh)+nKq)+O(nhc+mc)$ & $O(Knh+nq)$ \\
\bottomrule
\end{tabular}
\end{table}
\noindent \MODEL{} additionally requires one sparse partial eigendecomposition
per graph, independent of the mask.

\paragraph{Inference.}
For a fixed mask, imputation-based baselines reuse the completed matrix and
require one backbone forward pass, whereas GCNmf and \MODEL{} require one
mixture forward pass. When the mask changes at test time, the baselines must
re-run their imputation, while the mixture models handle the new mask within
the forward pass.

\paragraph{Summary.}
Structural conditioning comes at a small cost: \MODEL{} has the same per-epoch
complexity as GCNmf, adding only a dominated $O(nKq)$ term to the mixture
responsibilities and a single sparse partial eigendecomposition per graph.
Compared with imputation-based baselines, both mixture models incur a moderate
constant-factor overhead per epoch, proportional to the small number of mixture components \(K\) used in our experiments”,
because missing features are handled inside the forward pass. In exchange, they
require no mask-dependent preprocessing, avoid the $O(nd^2)$ inter-channel cost
and $O(dm)$ channel-specific operators of PCFI and FISF, and handle new
missingness patterns without re-running imputation.

\section{Ablations}\label{app:ablations}\label{app:fusion-comparison}

This appendix complements RQ3 (\Cref{tab:rq3_pe}). We report a matched ablation that removes
PE conditioning from PEMix, PE-only controls, the effect of jointly optimising the mixture,
and the complete per-level results of the positional-encoding ablation on TADPOLE. We additionally include an MLP PE-only control to separate PE information from message passing

\subsection{Matched structural-conditioning ablation}\label{app:pe-ablation}

Most models in \Cref{tab:rq3_pe} differ from PEMix not only in how they use PE but also in
their architecture and optimisation settings. To isolate the contribution of PE conditioning
alone, we compare PEMix with the same model in which the positional encodings are set to
zero, so that the mixture responsibilities receive no structural information. This variant is
a GCNmf trained exactly as PEMix. The two models share masks, splits, optimisation settings,
and the probabilistic first layer, and coincide exactly at $\mu=0$, where no feature is
missing. As shown in \Cref{fig:pe_ablation_appendix}, removing PE conditioning lowers the AUC
from 0.8243 to 0.7330 on TADPOLE where the gap is
largest at high missingness: from $\mu=0.6$ onwards it grows steadily, reaching 0.49.

\begin{figure}[ht]
    \centering
    \includegraphics[width=0.50\textwidth]{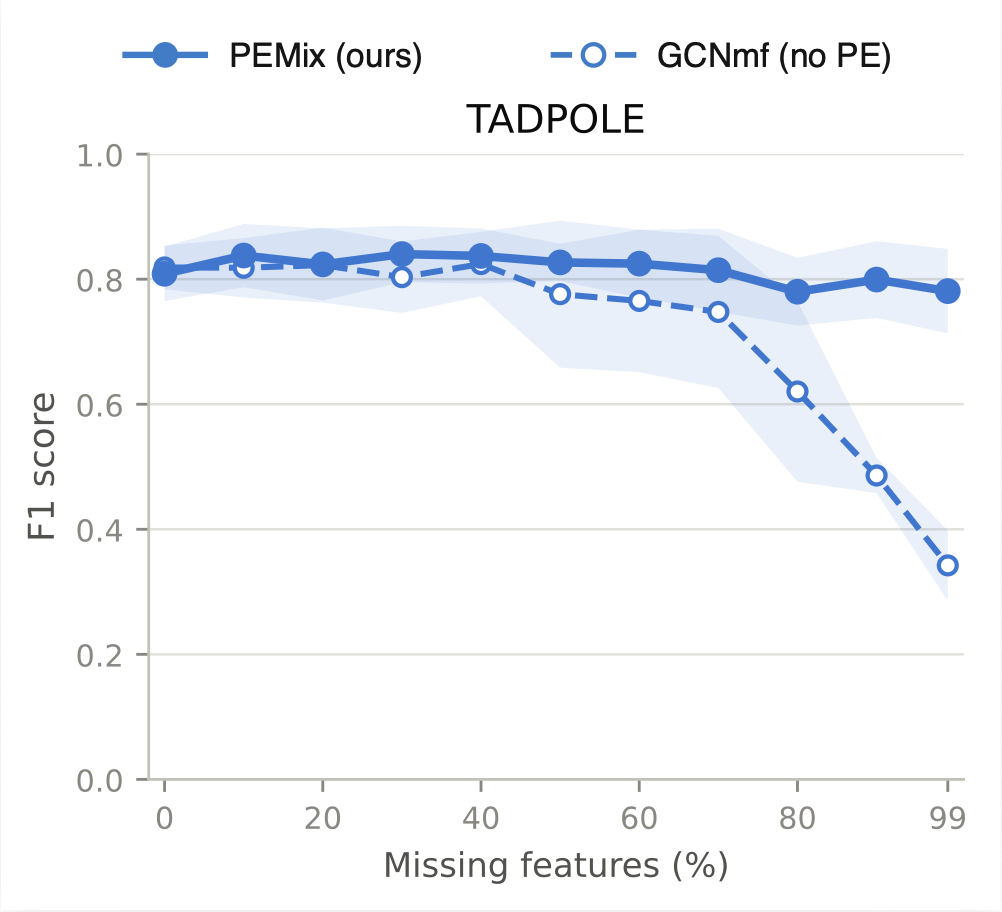}
    \caption{Matched ablation between PEMix and the same probabilistic recovery model without
    PE conditioning, obtained by setting the positional encodings to zero. The models share
    masks, splits, optimization settings, and the probabilistic first layer; at $\mu=0$ they
    coincide exactly. Mean $\pm$ standard deviation over five seeds.}
    \label{fig:pe_ablation_appendix}
\end{figure}

\subsection{PE-only controls}\label{app:peonly}

On TADPOLE, the PE-only row in \Cref{tab:rq3_pe} is a two-layer GCN whose only node input is
the PE, with the same depth, hidden width, dropout, and learning rate as PEMix. To separate
the contribution of PE from that of message passing, we also train an MLP with the same
depth, width, dropout, and learning rate on the PE alone, which ignores the graph. Since
neither control receives the node features, its score does not depend on $\mu$. The GCN
control obtains $0.295\pm0.092$ macro-F1 and the MLP control $0.254\pm0.027$, both well below
PEMix at every missingness level, including $\mu=0.99$ ($0.822\pm0.047$).

\subsection{Effect of joint GMM optimisation}
\label{app:gmm_joint}

Following \citet{TAGUCHI2021155}, we initialise the mixture parameters with EM and
subsequently optimise them jointly with the downstream prediction objective. To assess the
effect of this design choice, we compare the full model against a \emph{frozen-GMM} variant,
in which the mixture means, variances, and weights are fixed after EM initialisation, while
the GNN and prediction head remain trainable.

We evaluate both variants on EngRAD under \RTMAR{} at missing rates of
$20\%$, $50\%$, and $80\%$, using five seeds. As shown in
Table~\ref{tab:gmm_joint_frozen}, jointly optimising the mixture consistently
improves downstream forecasting performance compared with keeping the
EM-initialised mixture fixed.

\begin{table}[ht]
\centering
\caption{Effect of jointly optimising the GMM on EngRAD under \RTMAR{}.
Values are mean $\pm$ standard deviation over five seeds. Lower is better.}
\label{tab:gmm_joint_frozen}
\small\setlength{\tabcolsep}{12pt}\renewcommand{\arraystretch}{1.12}
\begin{tabular}{c cc}
\toprule
Missing rate & Joint GMM & Frozen GMM \\
\midrule
$20\%$ & $\mathbf{6.1751 \pm 0.0635}$ & $6.287 \pm 0.086$ \\
$50\%$ & $\mathbf{7.1536 \pm 0.0951}$ & $7.364 \pm 0.113$ \\
$80\%$ & $\mathbf{8.3722 \pm 0.0844}$ & $8.523 \pm 0.080$ \\
\bottomrule
\end{tabular}
\end{table}

\subsection{Positional-encoding ablation: complete results on TADPOLE}\label{app:pe-ablation-levels}
\Cref{tab:pe-ablation-tadpole,tab:pe-ablation-tadpole-high} report, for every model in
\Cref{tab:rq3_pe}, TADPOLE macro-F1 (mean $\pm$ standard deviation over five seeds) at each
missingness rate under RT. The last column gives the AUC reported in \Cref{tab:rq3_pe}.
GCNmf + PE (downstream) is a latent-fusion variant: the mixture is fitted on the node
features alone, and PE are concatenated to the representation produced by the probabilistic
first layer, so they cannot affect the mixture responsibilities.

\begin{table*}[p]
\centering
\caption{TADPOLE: positional-encoding ablation under RT, macro-F1 (mean $\pm$ standard deviation over five seeds). ``Mechanism'' uses PE while handling missing features; ``Downstream'' concatenates PE after missing-feature handling. Missingness $\mu$ is in percent; rates 0--50\%; AUC is reported in the continuation.}
\label{tab:pe-ablation-tadpole}
\footnotesize\setlength{\tabcolsep}{4pt}
\begin{adjustbox}{max width=\linewidth}
\begin{tabular}{llcccccc}
\toprule
PE usage & Model & 0 & 10 & 20 & 30 & 40 & 50 \\
\midrule
\multirow{4}{*}{Mechanism} & PEMix & $0.828\pm0.042$ & $0.864\pm0.041$ & $0.846\pm0.050$ & $0.838\pm0.050$ & $0.844\pm0.054$ & $0.826\pm0.028$ \\
 & FP + PE & $0.806\pm0.065$ & $0.818\pm0.024$ & $0.805\pm0.036$ & $0.803\pm0.030$ & $0.812\pm0.055$ & $0.776\pm0.074$ \\
 & PCFI + PE & $0.796\pm0.054$ & $0.790\pm0.047$ & $0.814\pm0.036$ & $0.817\pm0.040$ & $0.812\pm0.041$ & $0.818\pm0.026$ \\
 & FISF + PE & $0.806\pm0.040$ & $0.812\pm0.026$ & $0.806\pm0.042$ & $0.815\pm0.047$ & $0.813\pm0.041$ & $0.803\pm0.048$ \\
\midrule
\multirow{4}{*}{Downstream} & GCNmf + PE & $0.832\pm0.055$ & $0.819\pm0.053$ & $0.810\pm0.050$ & $0.812\pm0.046$ & $0.854\pm0.035$ & $0.816\pm0.037$ \\
 & FP + PE & $0.842\pm0.017$ & $0.824\pm0.026$ & $0.831\pm0.030$ & $0.785\pm0.047$ & $0.805\pm0.051$ & $0.780\pm0.054$ \\
 & PCFI + PE & $0.841\pm0.024$ & $0.809\pm0.032$ & $0.815\pm0.034$ & $0.811\pm0.017$ & $0.807\pm0.036$ & $0.824\pm0.038$ \\
 & FISF + PE & $0.835\pm0.016$ & $0.816\pm0.016$ & $0.810\pm0.055$ & $0.810\pm0.042$ & $0.845\pm0.033$ & $0.765\pm0.063$ \\
\midrule
-- & PE only & $0.295\pm0.092$ & $0.295\pm0.092$ & $0.295\pm0.092$ & $0.295\pm0.092$ & $0.295\pm0.092$ & $0.295\pm0.092$ \\
\bottomrule
\end{tabular}
\end{adjustbox}
\end{table*}

\begin{table*}[p]
\centering
\caption{TADPOLE: positional-encoding ablation under RT, macro-F1 (mean $\pm$ standard deviation over five seeds). ``Mechanism'' uses PE while handling missing features; ``Downstream'' concatenates PE after missing-feature handling. Missingness $\mu$ is in percent; rates 60--99\% and trapezoidal AUC over $\mu\in[0,0.99]$ (higher is better).}
\label{tab:pe-ablation-tadpole-high}
\footnotesize\setlength{\tabcolsep}{4pt}
\begin{adjustbox}{max width=\linewidth}
\begin{tabular}{llcccccc}
\toprule
PE usage & Model & 60 & 70 & 80 & 90 & 99 & AUC \\
\midrule
\multirow{4}{*}{Mechanism} & PEMix & $0.839\pm0.045$ & $0.844\pm0.039$ & $0.808\pm0.043$ & $0.791\pm0.067$ & $0.822\pm0.047$ & \textbf{0.8243} \\
 & FP + PE & $0.792\pm0.067$ & $0.722\pm0.146$ & $0.685\pm0.125$ & $0.562\pm0.082$ & $0.412\pm0.075$ & 0.7336 \\
 & PCFI + PE & $0.818\pm0.029$ & $0.796\pm0.061$ & $0.670\pm0.119$ & $0.559\pm0.114$ & $0.464\pm0.076$ & 0.7474 \\
 & FISF + PE & $0.822\pm0.046$ & $0.777\pm0.102$ & $0.701\pm0.041$ & $0.633\pm0.105$ & $0.491\pm0.057$ & 0.7573 \\
\midrule
\multirow{4}{*}{Downstream} & GCNmf + PE & $0.755\pm0.079$ & $0.734\pm0.163$ & $0.577\pm0.131$ & $0.490\pm0.035$ & $0.339\pm0.073$ & 0.7211 \\
 & FP + PE & $0.782\pm0.078$ & $0.775\pm0.048$ & $0.744\pm0.133$ & $0.762\pm0.054$ & $0.631\pm0.114$ & 0.7754 \\
 & PCFI + PE & $0.763\pm0.099$ & $0.807\pm0.063$ & $0.719\pm0.126$ & $0.736\pm0.095$ & $0.600\pm0.069$ & 0.7745 \\
 & FISF + PE & $0.825\pm0.052$ & $0.807\pm0.053$ & $0.731\pm0.153$ & $0.737\pm0.157$ & $0.603\pm0.092$ & 0.7798 \\
\midrule
-- & PE only & $0.295\pm0.092$ & $0.295\pm0.092$ & $0.295\pm0.092$ & $0.295\pm0.092$ & $0.295\pm0.092$ & 0.2923 \\
\bottomrule
\end{tabular}
\end{adjustbox}
\end{table*}

\end{document}